\documentclass[10pt]{article} 
\usepackage[accepted]{tmlr}


\usepackage{amsmath,amsfonts,bm}









\def\eqref#1{equation~\ref{#1}}









\def\1{\bm{1}}










\DeclareMathAlphabet{\mathsfit}{\encodingdefault}{\sfdefault}{m}{sl}
\SetMathAlphabet{\mathsfit}{bold}{\encodingdefault}{\sfdefault}{bx}{n}













\usepackage{hyperref}
\usepackage{url}

\usepackage{url}            
\usepackage{booktabs}       
\usepackage{amsfonts}       
\usepackage{nicefrac}       
\usepackage{microtype}      
\usepackage{xcolor}         

\usepackage{amsmath}
\usepackage{amssymb}
\usepackage{bm}
\usepackage{graphicx}
\usepackage{siunitx}
\usepackage{natbib}
\usepackage{wrapfig}
\usepackage[percent]{overpic}
\usepackage{amsthm}
\newtheorem{theorem}{Theorem}

\newcommand{\mcN}{\mathcal{N}}

\title{The Kernel Density Integral Transformation}

\author{\name Calvin McCarter \email mccarter.calvin@gmail.com \\
      Boston, MA
}


\begin{document}

\maketitle

\begin{abstract}
Feature preprocessing continues to play a critical role when applying machine learning and statistical methods to tabular data. In this paper, we propose the use of the kernel density integral transformation as a feature preprocessing step. Our approach subsumes the two leading feature preprocessing methods as limiting cases: linear min-max scaling and quantile transformation. We demonstrate that, without hyperparameter tuning, the kernel density integral transformation can be used as a simple drop-in replacement for either method, offering protection from the weaknesses of each. Alternatively, with tuning of a single continuous hyperparameter, we frequently outperform both of these methods. Finally, we show that the kernel density transformation can be profitably applied to statistical data analysis, particularly in correlation analysis and univariate clustering.
\end{abstract}

\section{Introduction}

Feature preprocessing is a ubiquitous workhorse in applied machine learning and statistics, particularly for structured (tabular) data.
Two of the most common preprocessing methods are min-max scaling, which linearly rescales each feature to have the range $[0, 1]$, and quantile transformation \citep{bartlett1947use,van1952order}, which nonlinearly maps each feature to its quantiles, also lying in the range $[0, 1]$.
Min-max scaling preserves the shape of each feature's distribution, but is not robust to the effect of outliers, such that output features' variances are not identical.
(Other linear scaling methods, such as $z$-score standardization, can guarantee output uniform variances at the cost of non-identical output ranges.)
On the other hand, quantile transformation reduces the effect of outliers, and guarantees identical output variances (and indeed, all moments) as well as ranges; however, all information about the shape of feature distributions is lost.  

In this paper, we observe that, by computing definite integrals over the kernel density estimator (KDE) \citep{rosenblatt1956remarks,parzen1962estimation,silverman1986density} and tuning the kernel bandwidth, we may construct a tunable ``happy medium'' between min-max scaling and quantile transformation.
Our generalization of these transformations is nevertheless both conceptually simple and computationally efficient, even for large sample sizes.
On a wide variety of tabular datasets, we demonstrate that the kernel density integral transformation is broadly applicable for tasks in machine learning and statistics.

We hasten to point out that kernel density estimators of quantiles have been previously proposed and extensively analyzed \citep{yamato1973uniform,azzalini1981note,sheather1990kernel,kulczycki1999kernel}.
However, previous works used the KDE to yield quantile estimators with superior statistical qualities.
Statistical consistency and efficiency are desired for such estimators, and so the kernel bandwidth $h$ is chosen 
such that $h \rightarrow 0$ as sample size $N \rightarrow \infty$.
However, in our case, we are not interested in estimating the true quantiles or the cumulative distribution function (c.d.f.), but instead use the KDE merely to construct a preprocessing transformation for downstream prediction and data analysis tasks. In fact, as we will see later, we choose $h$ to be large and non-vanishing, so that our preprocessed features deviate substantially from the empirical quantiles.

Our contributions are as follows.
First, we propose the kernel density integral transformation as a feature preprocessing method that includes the min-max and quantile transforms as limiting cases.
Second, we provide a computationally-efficient approximation algorithm that scales to large sample sizes.
Third, we demonstrate the use of kernel density integrals in correlation analysis, enabling a useful compromise between Pearson's $r$ and Spearman's $\rho$.
Third, we propose a discretization method for univariate data, based on computing local minima in the kernel density estimator of the kernel density integrals.

\section{Methods}

\subsection{Preliminaries}

Our approach is inspired by the behavior of min-max scaling and quantile transformation, which we briefly describe below.

The min-max transformation can be derived by considering a random variable $X$ defined over some known range $[U, V]$.
In order to transform this variable onto the range $[0, 1]$, one may define the mapping $S: \mathbb{R} \rightarrow [0, 1]$ defined as $x \rightarrow S(x) := \frac{x-U}{V-U}$, with the upper and lower bounds achieved at $x=U$ and $x=V$, respectively.
In practice, one typically observes $N$ random samples $X_1, \dots, X_N$, which may be sorted into order statistics $X_{(1)} \le X_{(2)} \le \dots \le X_{(N)}$.
Substituting the minimum and maximum for $U$ and $V$ respectively, we obtain the min-max scaling function 
\begin{gather}
    \hat{S}_N(x) := \begin{cases}
    \frac{x-X_{(1)}}{X_{(N)}-X_{(1)}}, \quad X_{(1)} \le x \le X_{(N)} \\
    0, \qquad\qquad\quad x \le X_{(1)} \\
    1, \qquad\qquad\quad X_{(N)} \le x.
    \end{cases}
\end{gather}

Quantile transformation can be derived by considering a random variable $X$ with known continuous and strictly monotonically increasing c.d.f. $F_X: \mathbb{R} \rightarrow [0, 1]$.
The quantile transformation (to be distinguished from the quantile function, or inverse c.d.f.) is identical to the c.d.f, simply mapping each input value $x$ to $F_X(x) := P[X \le x]$.
One typically observes $N$ random samples $X_1, \dots, X_N$ and obtains an empirical c.d.f. as follows:
\begin{gather}
    \hat{F}_N(x) = \frac{1}{N} \sum^N_{n=1} I\{X_n \le x\} := \hat{P}[X \le x].\label{eq:cdf}
\end{gather}
The quantile transformation also requires ensuring sensible behavior despite ties in the observed data. 

\subsection{The kernel density integral transformation}
Our proposal is inspired by the observation that, just as the quantile transformation is defined by a data-derived c.d.f., the min-max transformation can also be interpreted as the c.d.f. of the uniform distribution over $[X_{(1)}, X_{(N)}]$. We thus propose to interpolate between these via the kernel density estimator (KDE) \citep{silverman1986density}. Recall the Gaussian KDE with the following density:
\begin{gather}
    \widehat{f}_h(x) = \frac{1}{N}\sum_{n=1}^N K_h(x - X_n) = \frac{1}{Nh}\sum_{n=1}^N K\Big(\frac{x - X_n}{h}\Big), \label{eq:kde-pdf}
\end{gather}
where Gaussian kernel $K(z) = \exp(-z^2/2)/\sqrt{2\pi}$ and $h > 0$ is the kernel bandwidth.
Consider further the definite integral over the KDE above as a function of its endpoints:
\begin{gather}
    P_h(a, b) := \int^b_{a} \widehat{f}_h(x) dx 
    \label{eq:kde-cdf}.
\end{gather}
If we replace the empirical c.d.f. in Eq. (\ref{eq:cdf}) with the KDE c.d.f. using Eq. (\ref{eq:kde-cdf}), we obtain the following: 
\begin{gather}
    \hat{F}^{\textrm{KDI,naive}}_N(x; h) :=  P_h(-\infty, x).
    \label{eq:kdi-naive}
\end{gather}
However, we change the integration bounds from $(-\infty, \infty)$ to $(X_{(1)}, X_{(N)})$, 
such that we obtain $\hat{F}^{\textrm{KDI}}_N(x) = 0$ for $x \le X_{(1)}$ and $\hat{F}^{\textrm{KDI}}_N(x) = 1$ for $X_{(N)} \le x$. 
To do this, while keeping $\hat{F}^{\textrm{KDI}}_N(x)$ continuous, we define the final version of the kernel density integral (KD-integral) transformation as follows:
\begin{gather}
    \hat{F}^{\textrm{KDI}}_N(x;h) := \begin{cases}
        \frac{P_h(X_{(1)}, x)}{P_h(X_{(1)}, X_{(N)})}, \quad X_{(1)} \le x < X_{(N)} \\
        0, \qquad\qquad\qquad x < X_{(1)} \\
        1, \qquad\qquad\qquad X_{(N)} \le x.
    \label{eq:kdi}
    \end{cases}
\end{gather}
It can be seen that our proposed estimator matches the behavior of the min-max transformation and the quantile transformation at the extrema $X_{(1)}$ and $X_{(N)}$.
Also, in practice, we parameterize $h = \alpha \hat{\sigma}_X$, where $\alpha$ is the \textit{bandwidth factor}, and $\hat{\sigma}_X$ is an estimate of the standard deviation of $X$.

As the kernel bandwidth $h \rightarrow \infty$, the KD-integral transformation will converge towards the min-max transformation.
Meanwhile, when $h \rightarrow 0$, the KD-integral transformation converges towards the quantile transformation. (Proofs are given in the Appendix.)
Furthermore, as noted previously, $\hat{F}^{\textrm{KDQ,naive}}_N(x)$ is exactly the formula for computing the kernel density estimator of quantiles.
However, in this paper, we propose (for the first time, to our knowledge) choosing a large kernel bandwidth. 
Rather than estimating quantiles with improved statistical efficiency as in \citep{sheather1990kernel,kulczycki1999kernel}, our aim is carry out a transformation that is an optimal compromise between the min-max transformation and the quantile transformation, for a given downstream task.
As we will see in the experiment section, this optimal compromise is often struck at large bandwidths such as $h = 1 \cdot \hat{\sigma}_X$, even as the sample size $N \rightarrow \infty$.

\subsection{Efficient computation}

\begin{wrapfigure}{R}{0.6\textwidth}

    \centering
    \includegraphics[scale=0.55]{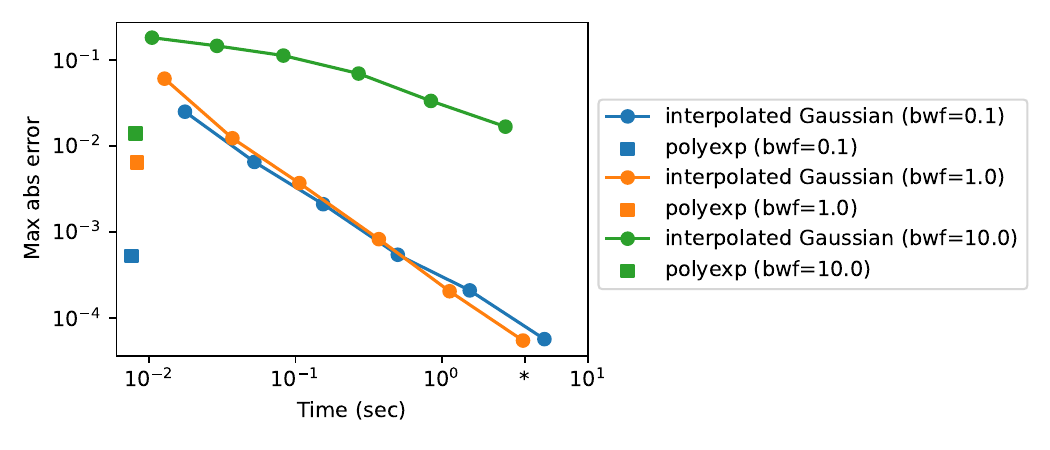}
    \caption{Precision versus runtime tradeoff for different bandwidth factors $\alpha \in \{0.1, 1, 10\}$. The data consisted of $N_{\textrm{train}}=10{,}000$ training points sampled from $\textrm{LogNormal}(0, 1)$ and $N_{\textrm{test}}=10{,}000$ equally-spaced test points within the range of the training data. The error is computed as the maximum absolute error between estimated values and those from the exact Gaussian KDE cdf.  
    We compare against naively approximating the Gaussian KDE via sampling $S \in \{30, 100, 300, 10^3, 3 \cdot \! 10^3, 10^4\}$ without replacement to form the KDE, followed by interpolating with $R=1000$ references as usual. The runtime using exact Gaussian cdf computation is depicted on the x-axis (``*''). Runtime was measured on a machine with a 2.8 GHz Core i5 processor.
    \vspace{-10pt}
    }
    \label{fig:approx}
\end{wrapfigure}

For supervised learning preprocessing, the KD-integral transformation must accomodate separate train and test sets.
Thus, it is desirable that it be estimated quickly on a training set, efficiently (yet approximately) represented with low space complexity, and then applied quickly to test samples.   
For the Gaussian kernel, we compute and store KD-integrals at $R = \min(R_{\mathrm{max}}=1000, N)$ reference points, equally spaced in $[0, 1]$.
New points are transformed by linearly interpolating these KD-integrals.
This approach has runtime complexity of $O(R N_{\textrm{train}})$ at training time and $O(\log(R)N_{\textrm{test}})$ at test time, and space complexity of $O(R)$.
While this approach offers tunable control of the error and efficiency at test-time, the tradeoff between precision and training time is not ideal, as shown in Figure \ref{fig:approx}.

To address this, we applied the $\mathrm{poly}(|x|)e^{-|x|}$ kernel \citep{hofmeyr2019fast} to our setting.
The polynomial-exponential family of kernels, parameterized by the order $\kappa$ of the polynomial, can be evaluated via dynamic programming with runtime complexity $O(\kappa N)$, eliminating the quadratic dependence on the number of samples.
We use order $\kappa=4$, which yields a smooth kernel with infinite support that closely approximates the Gaussian kernel with an appropriate rescaling of the bandwidth \citep{hofmeyr2019fast}. 
More details are given in the Appendix.

As shown in Figure \ref{fig:approx}, for 10k samples, this results in almost a 1000x speedup compared to exact evaluation of the Gaussian KDE cdf, across a range of bandwidth factors. It also offers a 10x to 100x training time speedup at a fixed precision level, compared to naively speeding up the Gaussian KDE by subsampling data points then interpolating as usual.

Our software package, implemented in Python/Numba with a Scikit-learn \citep{pedregosa2011scikit} compatible API, is available at
\url{https://github.com/calvinmccarter/kditransform}.

\subsection{Application to correlation analysis}
In correlation analysis, whereas Pearson's $r$ \citep{pearson1895vii} is appropriate for measuring the strength of linear relationships, Spearman's rank-correlation coefficient $\rho$ \citep{spearman1904proof} is useful for measuring the strength of non-linear yet monotonic relationships.
Spearman's $\rho$ may be computed by applying Pearson's correlation coefficient after first transforming the original data to quantiles or ranks $R(x) := N \hat{F}_N(x)$.
Thus, it is straightforward to extend Spearman's $\rho$ by computing the correlation coefficient between two variables by computing their respective KD-integrals, then applying Pearson's formula as before.
Like Spearman's $\rho$, it is apparent that ours is a particular case of a general correlation coefficient $\Gamma$ \cite{kendall1948rank}. For $N$ samples of random variables $X$ and $Y$, the general correlation coefficient $\Gamma$ may be written as
\begin{gather*}
    \Gamma = \frac{\sum_{i,j = 1}^N a_{ij}b_{ij}}{\sqrt{\sum_{i,j = 1}^N a_{ij}^2 \sum_{i,j = 1}^N b_{ij}^2}},
\end{gather*}
for $a_{ij} := r_{j}-r_{i}, b_{ij}:=s_{j}-s_{i}$, where now $r_i$ and $s_i$ correspond to the KD-integrals of $X_i$ and $Y_i$, respectively.

Because ranks are robust to the effect of outliers, Spearman's $\rho$ is also useful as a robust measure of correlation; our proposed approach inherits this benefit, as will be shown in the experiments.

\subsection{Application to univariate clustering}

Here we apply KD-integral transformation to the problem of univariate clustering (a.k.a. discretization).
Our approach relies on the intuition that local minima and local maxima of the KDE will tend to correspond to cluster boundaries and cluster centroids, respectively.
However, naive application of this idea would perform poorly because low-density regions will tend to have many isolated extrema, causing us to partition low-density regions into many separate clusters.
When we apply the KD-integral transformation, we draw such points in low-density regions closer together, because the definite integrals between such points will tend to be small. 
Then, when we form a KDE on these transformed points and identify local extrema, we avoid partitioning such low-density regions into many separate clusters.

\begin{samepage}
Our proposed approach thus comprises three steps:
\begin{enumerate}
    \item Compute $T_n = \hat{F}^{\textrm{KDQ}}_N(X_n), \ \forall n \in \{1, \dots, N\}$.
    \item Form the kernel density estimator $\hat{f}_h(t)$ for $T_1, \dots, T_N$. For this second KDE, select a vanishing bandwidth via Scott's Rule $(h=N^{-0.2}\sigma_X)$ \citep{scott1992multivariate}.
    \item Identify the cluster boundaries from the local minima in $\hat{f}_h(t)$, and inverse-KD-integral-transform the boundaries for $T_n$ to obtain boundaries for clustering $X_n$. 
\end{enumerate}
\end{samepage}

\section{Experiments}

In this section, we evaluate our approach on supervised classification problems with real-world tabular datasets, on correlation analyses using simulated and real data, and on clustering of simulated univariate datasets with known ground-truth.
We use the $\mathrm{poly}(|x|)e^{-|x|}$ kernel with polynomial order $\kappa=4$ in our experiments. 

\subsection{Feature preprocessing for supervised learning}

\subsubsection{Classification with PCA and Gaussian Naive Bayes}

We first replicate the experimental setup of \citep{raschka2014feature}, analyzing the effect of feature preprocessing methods on a simple Naive Bayes classifier for the Wine dataset \citep{forina1988parvus}.
In addition to min-max scaling, used in \citep{raschka2014feature}, we also try quantile transformation and our proposed KD-integral transformation approach.
In Figure \ref{fig:wine-malic-acid}, we illustrate the effect of min-max scaling, quantile transformation, and KD-integral transformation on the \texttt{MalicAcid} feature on this dataset.
We see that KD-integral transform in Figure \ref{fig:wine-malic-acid}(C) is concave for inputs $>\!3$, compressing outliers together, while preserving the bimodal shape of the feature distribution.
It is substantially smoother than the empirical quantile transform in Figure \ref{fig:wine-malic-acid}(B).

\begin{figure}
    \centering
    \begin{tabular}{l l l}
    \begin{overpic}[width=0.23\linewidth]{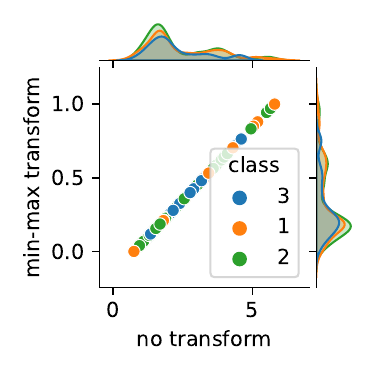} \put (10,90) {(A)} \end{overpic} &
    \begin{overpic}[width=0.23\linewidth]{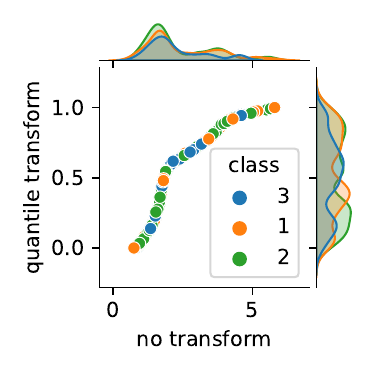} \put (10,90) {(B)} \end{overpic} &
    \begin{overpic}[width=0.23\linewidth]{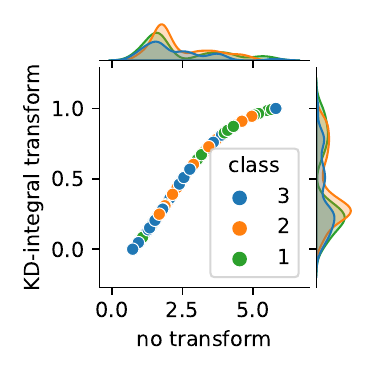} \put (10,90) {(C)} \end{overpic}\\
    \end{tabular}
    \caption{Comparison of (A) min-max scaling, (B) quantile transformation, and (C) KD-integral transformation on the \texttt{MalicAcid} feature in the Wine dataset. The horizontal density plots depict the distribution of the original data, while the vertical density plots show the distribution after each preprocessing step.
    }
    \label{fig:wine-malic-acid}
\end{figure}

We examine the accuracy resulting from each of the preprocessing methods, followed by (as in \citep{raschka2014feature}) principle component analysis (PCA) with 2 components, followed by a Gaussian Naive Bayes classifier, evaluated via a 70-30 train-test split.
For the KD-integral transformation, we show results for the default bandwidth factor of 1, for a bandwidth factor chosen via inner 30-fold cross-validation, and for a sweep of bandwidth factors between 0.1 and 10.
The accuracy, averaged over 100 simulated train-test splits, is shown in Figure \ref{fig:classification}(A).
We repeat the above experimental setup for 3 more popular tabular classification datasets: Iris \citep{fisher1936use}, Penguins \citep{gorman2014ecological}, Hawks \citep{cannon2019stat2data}, shown in Figure \ref{fig:classification}(B-D), respectively.

Compared to min-max scaling and quantile transformation, our tuning-free proposal wins on Wine; on Iris, it is sandwiched between min-max scaling and quantile transformation; on Penguins, it wins over both min-max and quantile transformation; on Hawks, it ties with min-max while quantile transformation struggles.
We also compare to $z$-score scaling; our tuning-free method beats it on Wine and Iris, loses to it on Penguins, and ties with it on Hawks.
Overall, among the tuning-free approaches, ours comes in first, second, second, and second, respectively. 
Our approach with tuning is always non-inferior or better than min-max scaling and quantile transformation, but loses to $z$-score scaling on Penguins; however, note that $z$-score scaling performs poorly on Wine and Iris.
\begin{figure}
    \centering
    \begin{tabular}{l l}
    (A) Wine $(N=178, D=13)$ & (B) Iris $(N=150, D=4)$ \\
    \includegraphics[scale=0.6]{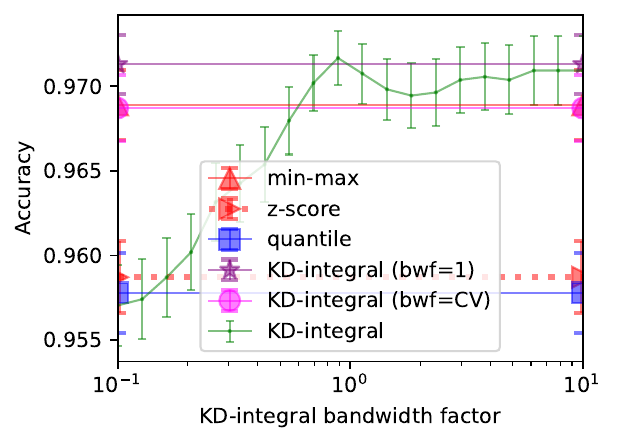} &
    \includegraphics[scale=0.6]{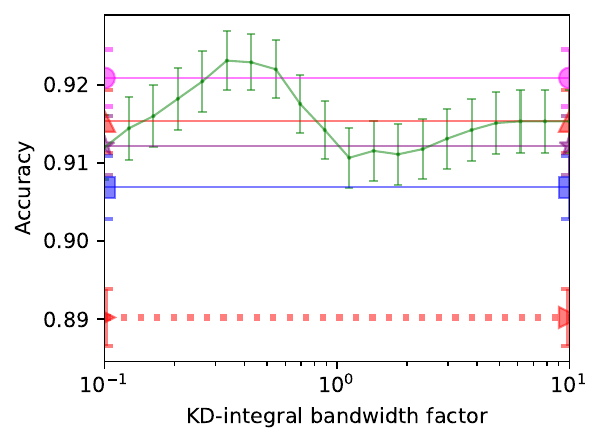} \\
    (C) Penguins $(N=333, D=4)$ & (D) Hawks $(N=891, D=5)$ \\
    \includegraphics[scale=0.6]{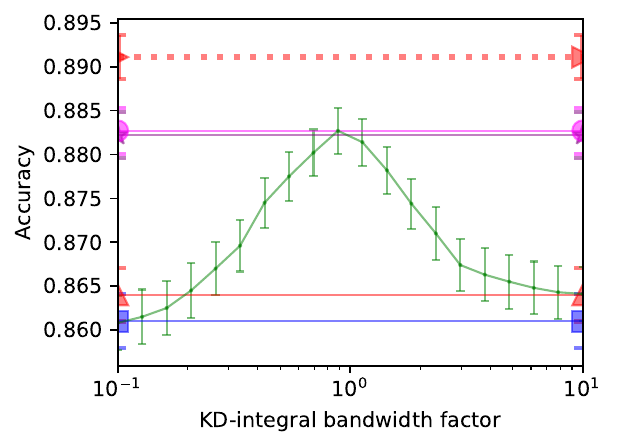} &
    \includegraphics[scale=0.6]{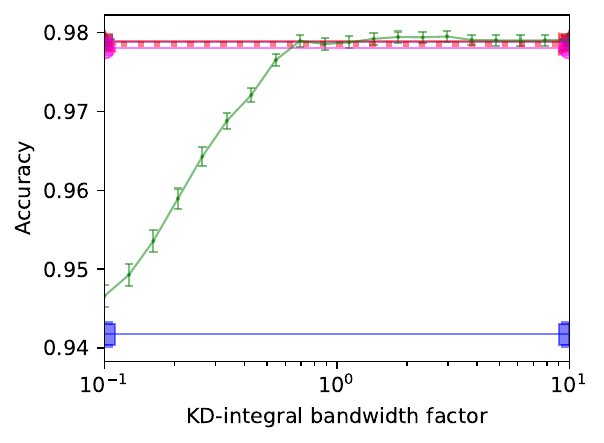}
    \end{tabular}
    \caption{Accuracy on supervised classification problems for different feature preprocessing methods. Results are shown for Wine (A), Iris (B), Penguins (C), and Hawks (D); higher accuracy is better. Accuracy is shown as a horizontal line for min-max scaling, $z$-score scaling, quantile transformation, KD-integral with the default bandwidth factor $\alpha=1$, and KD-integral with bandwidth selected via CV. In green, we show accuracy as a function of KD-integral bandwidth factor $\alpha$. Error bars depict the standard deviation over 100 simulations}
    \label{fig:classification}
\end{figure}
\begin{figure}
    \centering
    \begin{tabular}{l l}
    (A) CA Housing $(N=20640, D=8)$ & (B) Abalone $(N=4177, D=10)$ \\
    \includegraphics[scale=0.6]{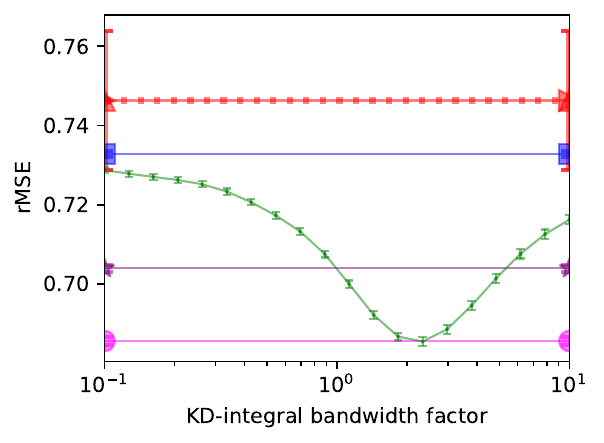} &
    \includegraphics[scale=0.6]{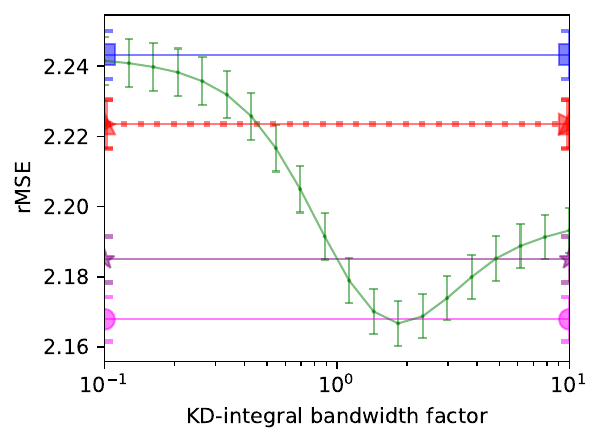}
    \end{tabular}
    \caption{Root mean-squared error (rMSE) on supervised regression problems for different feature preprocessing methods. Results are shown for California housing (A), and Abalone (B); lower rMSE is better. Performance is shown as a horizontal line for min-max scaling, quantile transformation, KD-integral with the default bandwidth factor $\alpha=1$, and KD-integral with bandwidth selected via CV. In green, we show rMSE as a function of KD-integral bandwidth factor $\alpha$.}
    \label{fig:regression}
\end{figure}

\subsubsection{Supervised regression with linear regression}

We compare the different methods on two standard regression problems, California Housing \citep{pace1997sparse} and Abalone \citep{misc_abalone_1}, with results depicted in Figure \ref{fig:regression}.
Here we measure the root-mean-squared error on linear regression after preprocessing, with cross-validation setup as before.
KD-integral, both with default bandwidth and with cross-validated bandwidth, outperformed the other approaches.
Even in CA Housing, where quantile transformation outperforms min-max scaling due to skewed feature distributions, there is evidently benefit to maintaining a large enough bandwidth to preserve the distribution's shape.
In both regression datasets, the performance improvement offered by KD-integral exceeds the difference in performance between linear and quantile transformations.
We perform additional experiments on CA Housing, with data subsampling, showing that the chosen bandwidth remains constant regardless of sample size; see the Appendix (A.3). 

\subsubsection{Linear classification on Small Data Benchmarks}

We next compared preprocessing methods on a dataset-of-datasets benchmark, comprising 142 tabular datasets, each with at least 50 samples.
We replicated the experimental setup of the Small Data Benchmarks \citep{feldman2021machine} on the UCI++ dataset repository \citep{luis_paulo_2015_13748}.
In \citep{feldman2021machine}, the leading linear classifier was a support vector classifier (SVC) with min-max preprocessing, to which we appended SVC with quantile transformation and SVC with KD-integral transformation.
As in \citep{feldman2021machine}, for each preprocessing method, we optimized the regularization hyperparameter 
$C \in \{10^{-4}, 10^{-3}, \dots, 10^{2}\}$,
evaluating each method via one-vs-rest-weighted ROC AUC, averaged over 4 stratified cross-validation folds.
For the latter, we measured results with both the default bandwidth factor $\alpha=1$ (so as to give equal hyperparameter optimization budgets to each approach), and with cross-validation grid-search to select $C$ and $\alpha \in \{3^{-1}, 3^{0}, 3^1\}$.
Our results are summarized in Table \ref{tab:smalldata}.
We see that KD-integral transformation provides statistically-significant greater average ROC AUC, with less variance, at the same tuning budget as the other approaches; further improvement is provided by tuning.
We further analyzed the performance of the different methods in terms of the number of samples in each dataset in Figure \ref{fig:smalldata}.
Plotting the relative ROC AUC against the number of samples $N$, we see that our proposed approach is particularly helpful in avoiding suboptimal performance for small-$N$ datasets.

\begin{table}[t]
\caption{Performance of preprocessing methods on Small Data Benchmarks, as measured by area under the Receiver Operating Characteristic curve (ROC AUC). The columns display the mean and standard deviation of the ROC AUC, computed over 142 datasets in the benchmark. We also show $p$-values from one-sided Wilcoxon signed-rank test comparisons.}
\label{tab:smalldata}
\begin{center}
\begin{tabular}{lccc}
\multicolumn{1}{c}{\bf METHOD}  & \multicolumn{1}{c}{\bf Mean (StdDev) ROC AUC} & \multicolumn{1}{c}{\bf $p$ (vs KDI ($\alpha\!=\!1$))} & \multicolumn{1}{c}{\bf $p$ (vs KDI ($\alpha\!=\!\textrm{CV}$))}
\\ \hline \\
        Min-max & 0.864 (0.132)  & \num{1.6e-3} & \num{5.1e-8}\\
        Quantile & 0.866 (0.131) & \num{7.9e-3} & \num{2.9e-4}\\
        KD-integral ($\alpha=1$) & 0.868 (0.129) & & \num{3.7e-6} \\
        KD-integral ($\alpha=CV$) & 0.869 (0.129) \\
\end{tabular}
\end{center}
\end{table}

\begin{figure}
    \centering
    \includegraphics[scale=0.7]{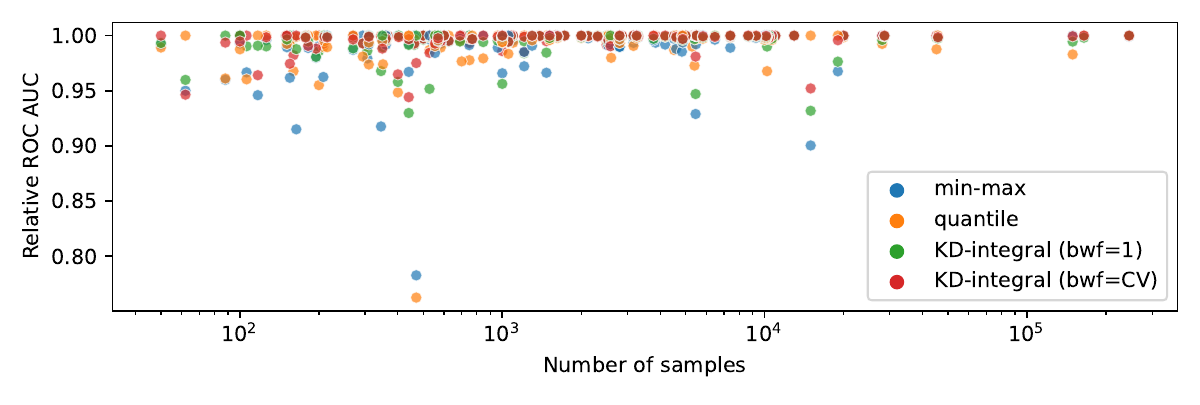}
    \caption{Performance of preprocessing methods on Small Data Benchmarks, plotted against dataset size. For each dataset, we compute the relative ROC AUC for a given method as its own ROC AUC, divided by the maximum ROC AUC over all preprocessing methods for that dataset.}
    \label{fig:smalldata}
\end{figure}

\subsection{Correlation analysis}

To provide a basic intuition, we first illustrate the different methods on synthetic datasets shown in Figure \ref{fig:correlation-wiki}, replicating the example from \citep{wikispearman}.
When two variables are monotonically but not linearly related, as in Figure \ref{fig:correlation-wiki}(A), the Spearman correlation exceeds the Pearson correlation.
In this case, our approach behaves similarly to Spearman's.
When two variables have noisy linear relationship, as in Figure \ref{fig:correlation-wiki}(B), both the Pearson and Spearman have moderate correlation, and our approach interpolates between the two.
When two variables have a linear relationship, yet are corrupted by outliers, the Pearson correlation is reduced due to the outliers, while the Spearman correlation is robust to this effect.
In this case, our approach also behaves similarly to Spearman's.

\begin{figure}
    \centering
    \begin{tabular}{lll}
    \begin{overpic}[width=0.26\linewidth]{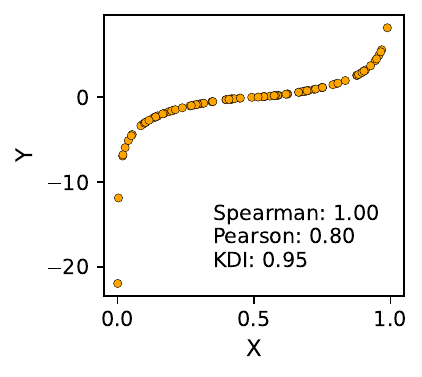} \put (0,85) {(A)} \end{overpic} &
    \begin{overpic}[width=0.25\linewidth]{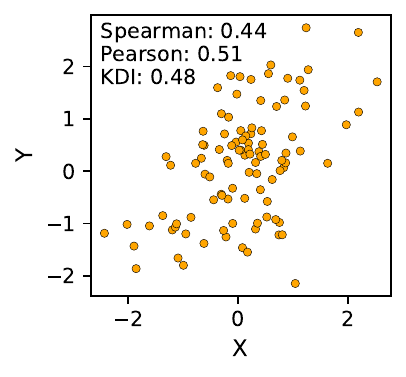} \put (0,90) {(B)} \end{overpic} &
    \begin{overpic}[width=0.26\linewidth]{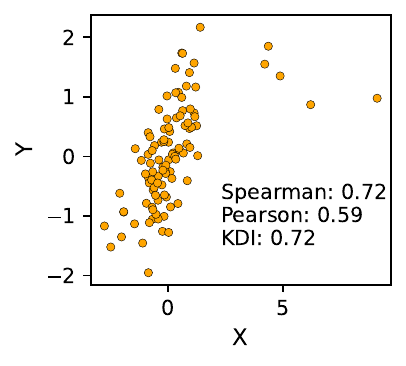} \put (0,86) {(C)} \end{overpic}
    \end{tabular}
    \caption{Illustration of correlation analysis using Pearson's $r$, Spearman's $\rho$, and our proposed approach, for simulated data. Three scenarios are depicted: (A) nonlinear yet monotonic relationship, (B) noisy linear relationship, and (C) linear relationship corrupted by outliers.}
    \label{fig:correlation-wiki}
\end{figure}

Next, we perform correlation analysis on the California housing dataset, containing district-level features such as average prices and number of bedrooms from the 1990 Census.
Overall, the computed correlation coefficients are typically close, with only a few exceptions, as shown in Figure \ref{fig:correlation-california-disagree}. 
Our approach's top two disagreements with Pearson's are on (\texttt{AverageBedrooms}, \texttt{AverageRooms}) and (\texttt{MedianIncome}, \texttt{AverageRooms}). 
Our approach's top two disagreements with Spearman's are on (\texttt{AverageBedrooms}, \texttt{AverageRooms}) and (\texttt{Population}, \texttt{AverageBedrooms}).
We further observe that KD-integral-based correlations typically, but not always, lie between the Pearson and Spearman correlation coefficients.

\begin{figure}[t]
    \centering
    \begin{tabular}{ll}
    (A) & (B)\\
    \includegraphics[scale=0.55]{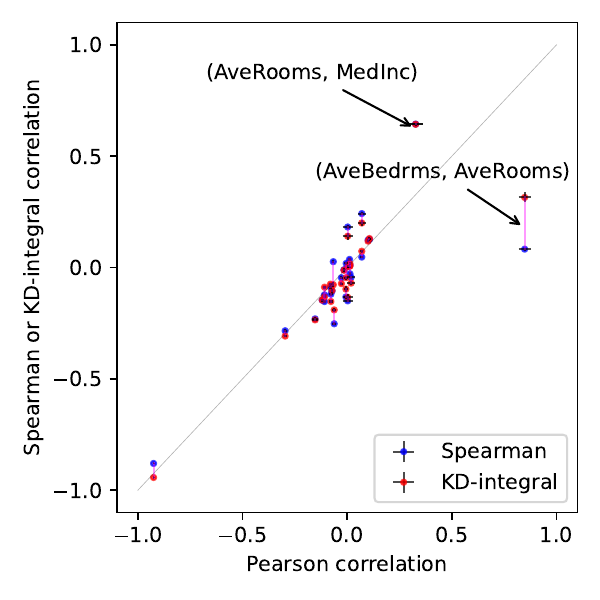} &
    \includegraphics[scale=0.55]{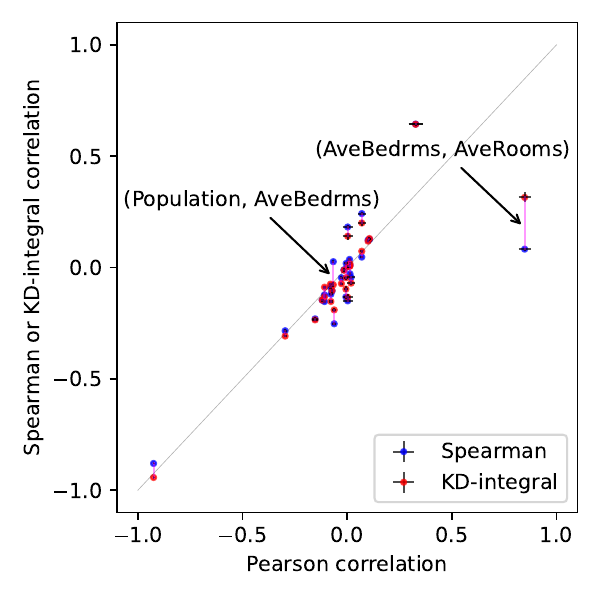} \\
    \end{tabular}
    \caption{Correlation coefficients derived from the California housing dataset. The pink line depicts the gap between Spearman and KD-integral correlations, while the distance from the gray line shows how far each are from the Pearson correlation. Both (A) and (B) contain the same data, but top disagreements between ours and Pearson's, and ours and Spearman's are highlighted separately in (A) and (B), respectively. The error bars in black depict the standard deviation of each correlation, computed from 100 bootstrap simulations.}
    \label{fig:correlation-california-disagree}
\end{figure}

We analyze the correlation disagreements for (\texttt{AverageBedrooms}, \texttt{AverageRooms}) in Figure \ref{fig:correlation-california-averooms-avebedrms}.
From the original data, it is apparent that \texttt{AverageBedrooms} and \texttt{AverageRooms} are correlated, whether we examine the full dataset or exclude outlier districts.
This relationship was obscured by quantile transformation (and thus, by Spearman correlation analysis), whereas it is still noticeable after KD-integral transformation.

\begin{figure}[t]
    \centering
    \begin{tabular}{c}
    \includegraphics[scale=0.16]{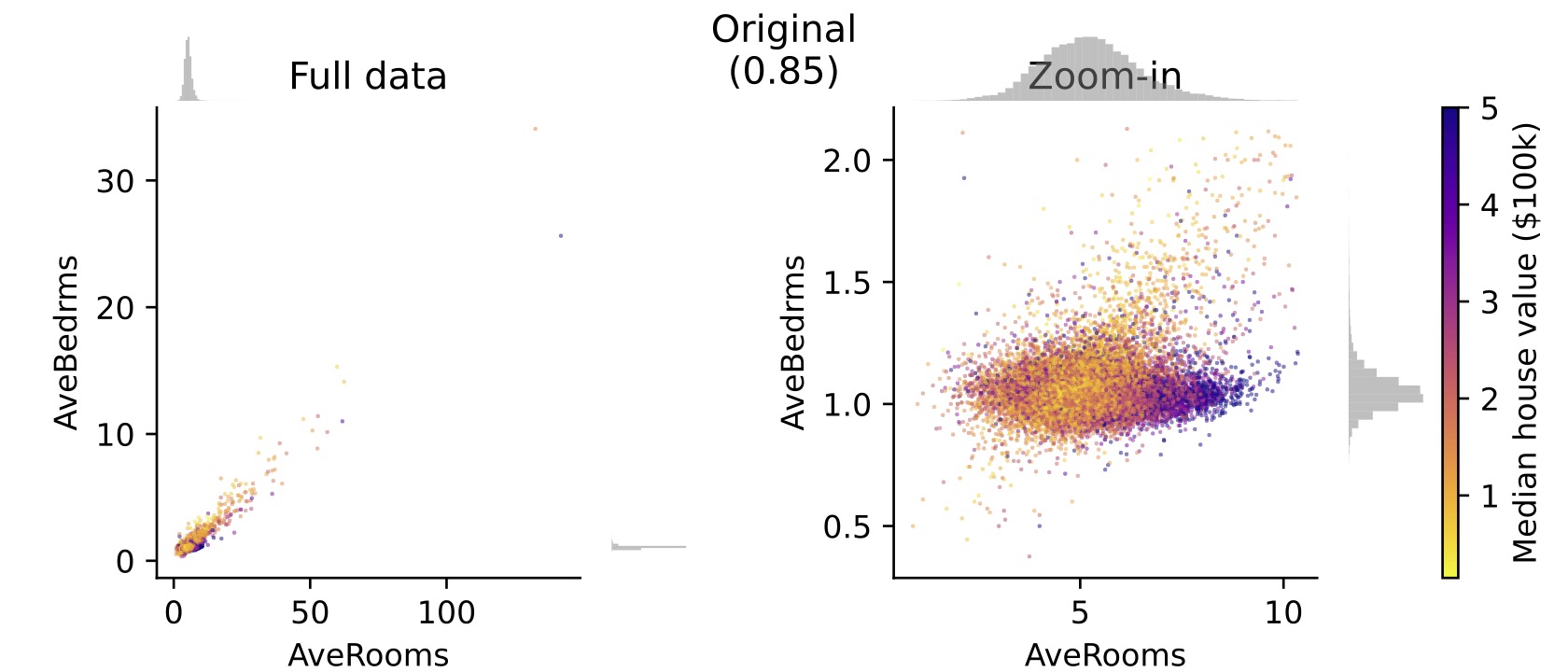} \\
    \includegraphics[scale=0.16]{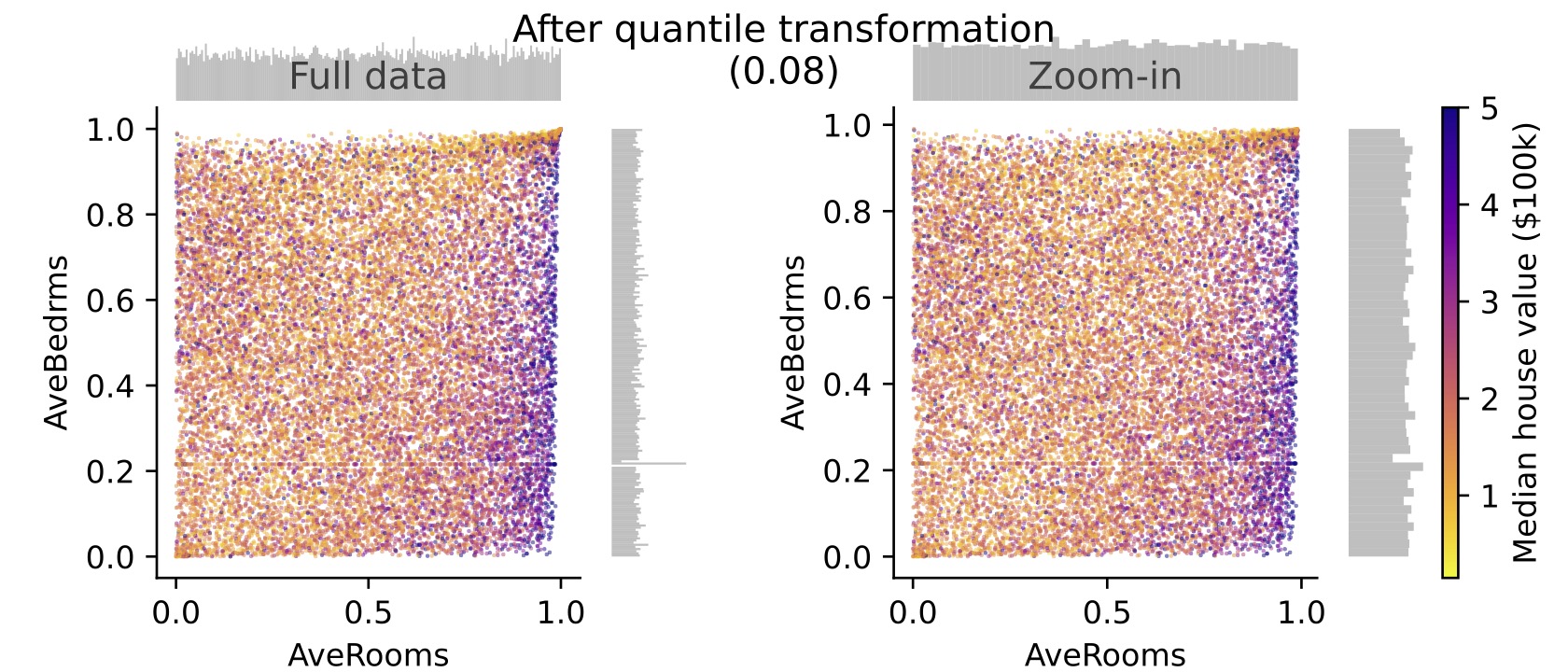} \\
    \includegraphics[scale=0.16]{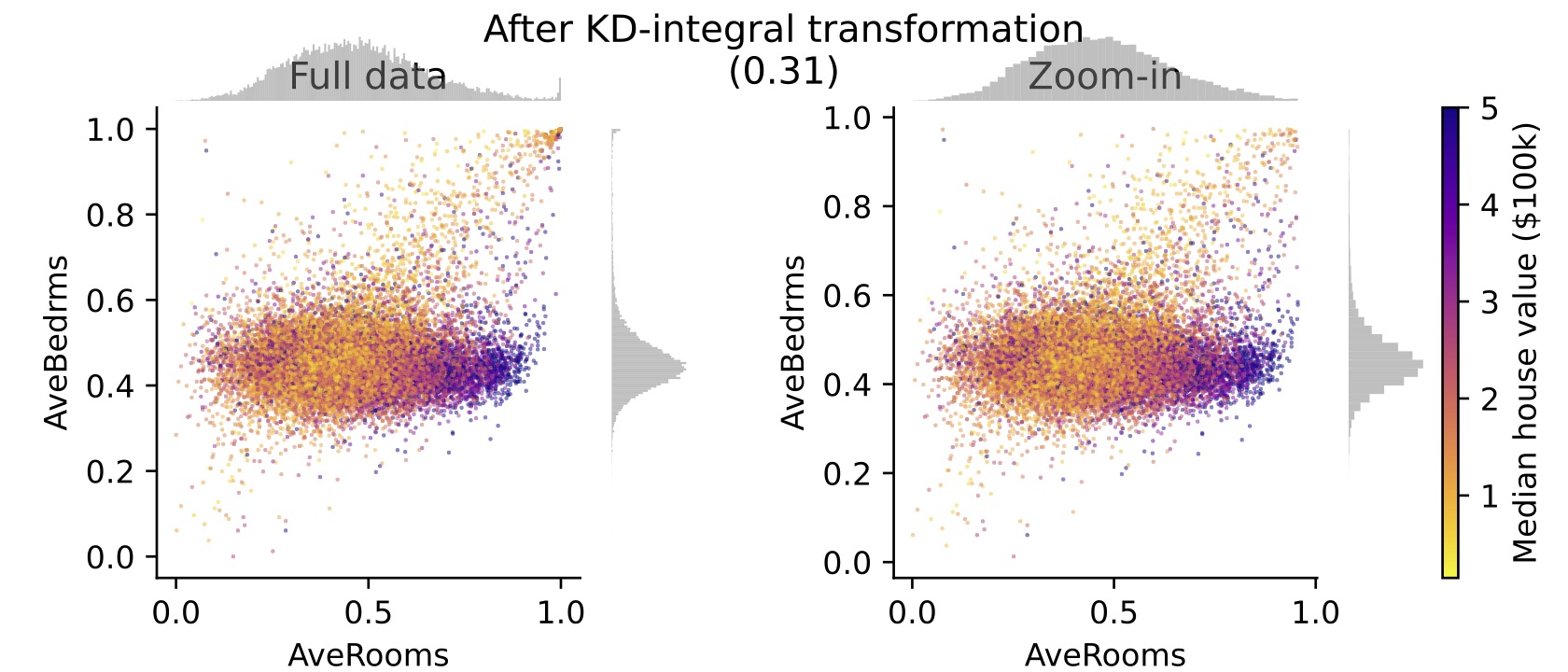}
    \end{tabular}
    \caption{Correlation analysis for (\texttt{AverageBedrooms}, \texttt{AverageRooms}) in the California housing dataset. The rows, from top to bottom, correspond to original data, quantile transformation, and KD-integral transformation. The full dataset is shown on the left, while outliers are excluded on the right. Each district is colored by its median house value. In parenthesis above each row are the Pearson $(0.85)$, Spearman $(0.08)$, and KD-integral $(0.31)$ estimated correlations between the variables.}
    \label{fig:correlation-california-averooms-avebedrms}
\end{figure}

We repeat the analysis of disagreement for (\texttt{MedianIncome}, \texttt{AverageRooms}) and (\texttt{Population}, \texttt{AverageBedrooms}) in Figure \ref{fig:correlation-california}.
For the former disagreement, our approach agrees with quantile transformation-based analysis, identifying the typical positive dependence between median income and average rooms, by reducing the impact of districts with extremely high average rooms.
For the latter disagreement, our approach agrees with original data-based analysis, identifying the negative relationship one would expect to observe between district population (and therefore density) and the average number of bedrooms.

\begin{figure}
    \centering
    \begin{tabular}{l|l}
    (A) & (B) \\
    \hspace*{-2em}\includegraphics[scale=0.12]{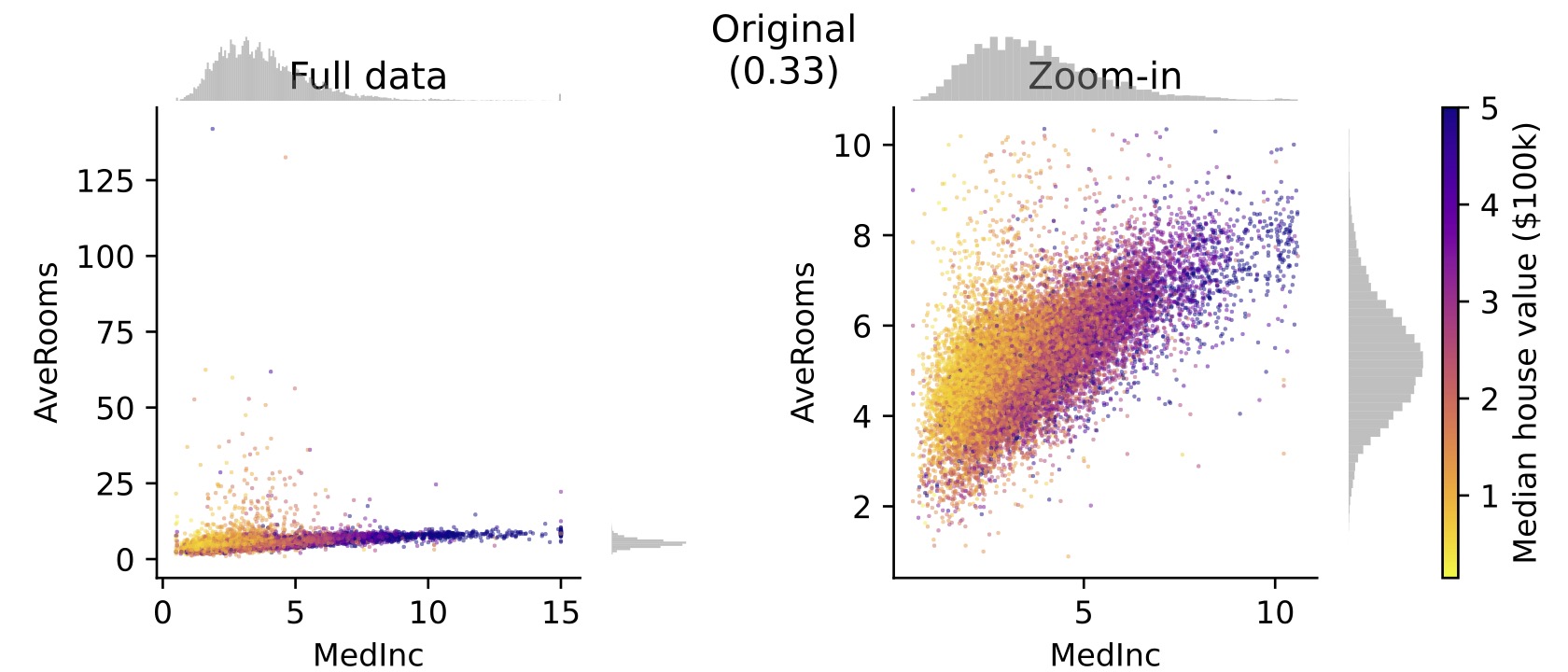} &
    \includegraphics[scale=0.12]{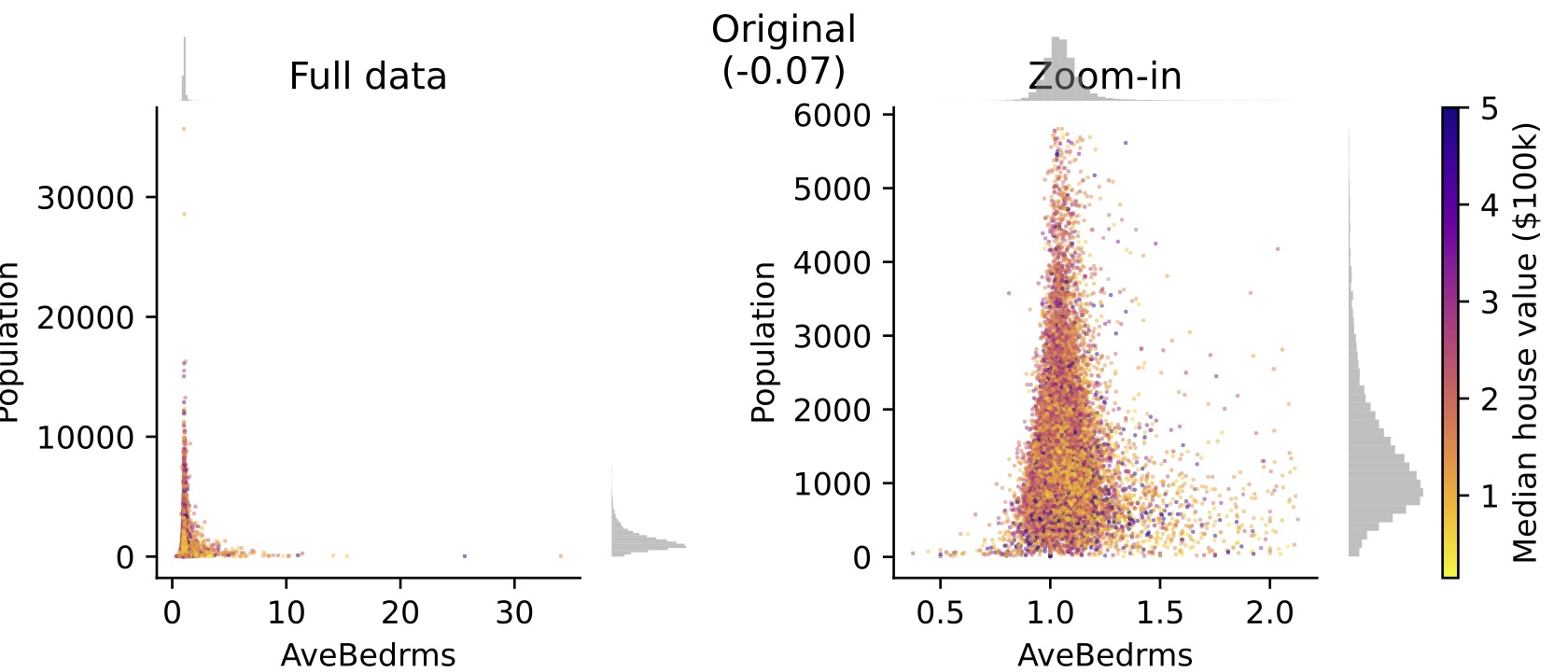} \\
    \hspace*{-2em}\includegraphics[scale=0.12]{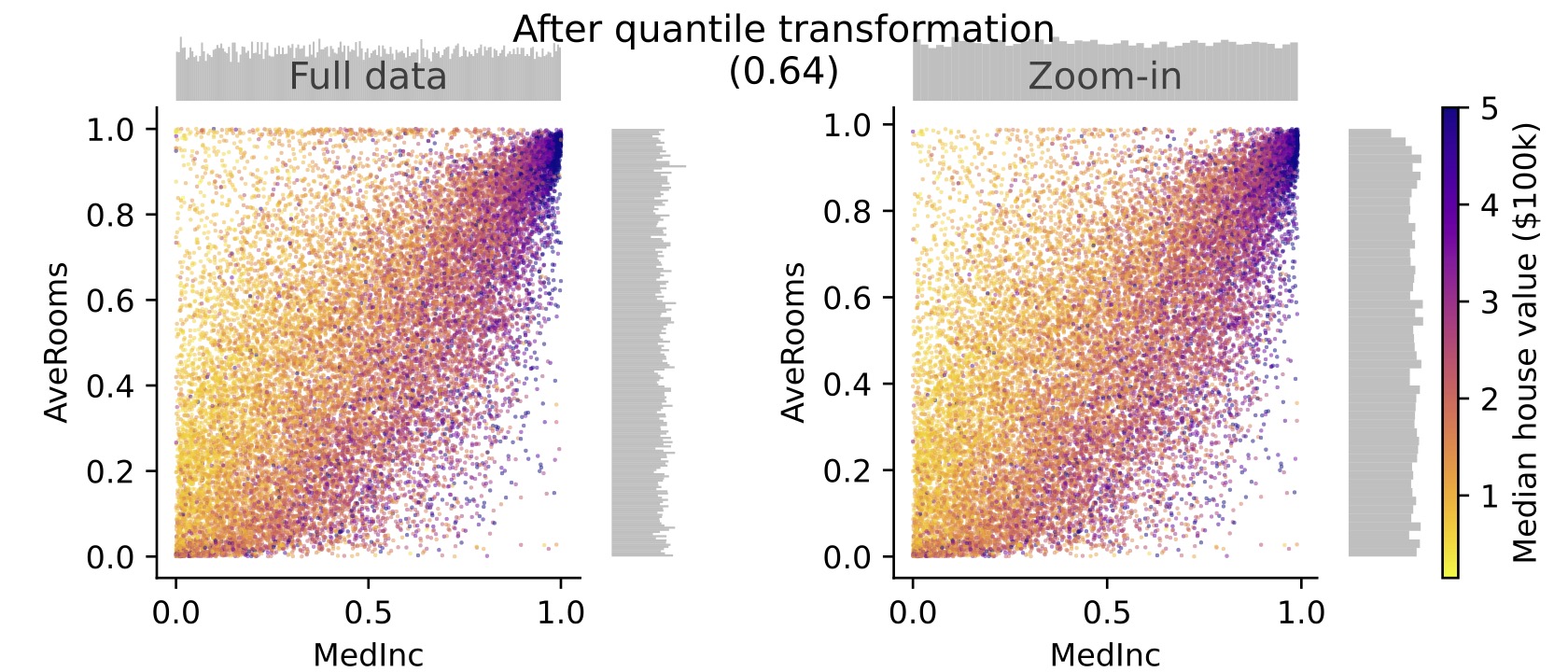} &
    \includegraphics[scale=0.12]{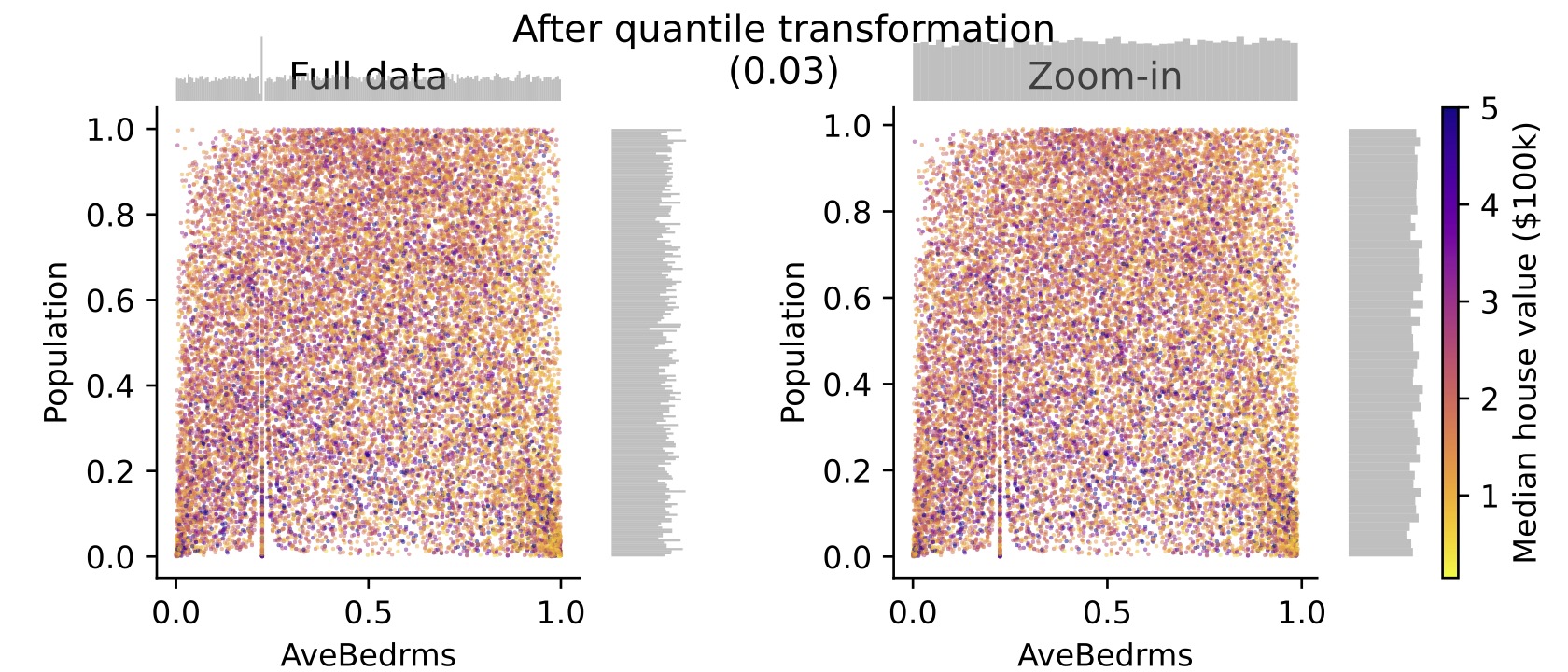} \\
    \hspace*{-2em}\includegraphics[scale=0.12]{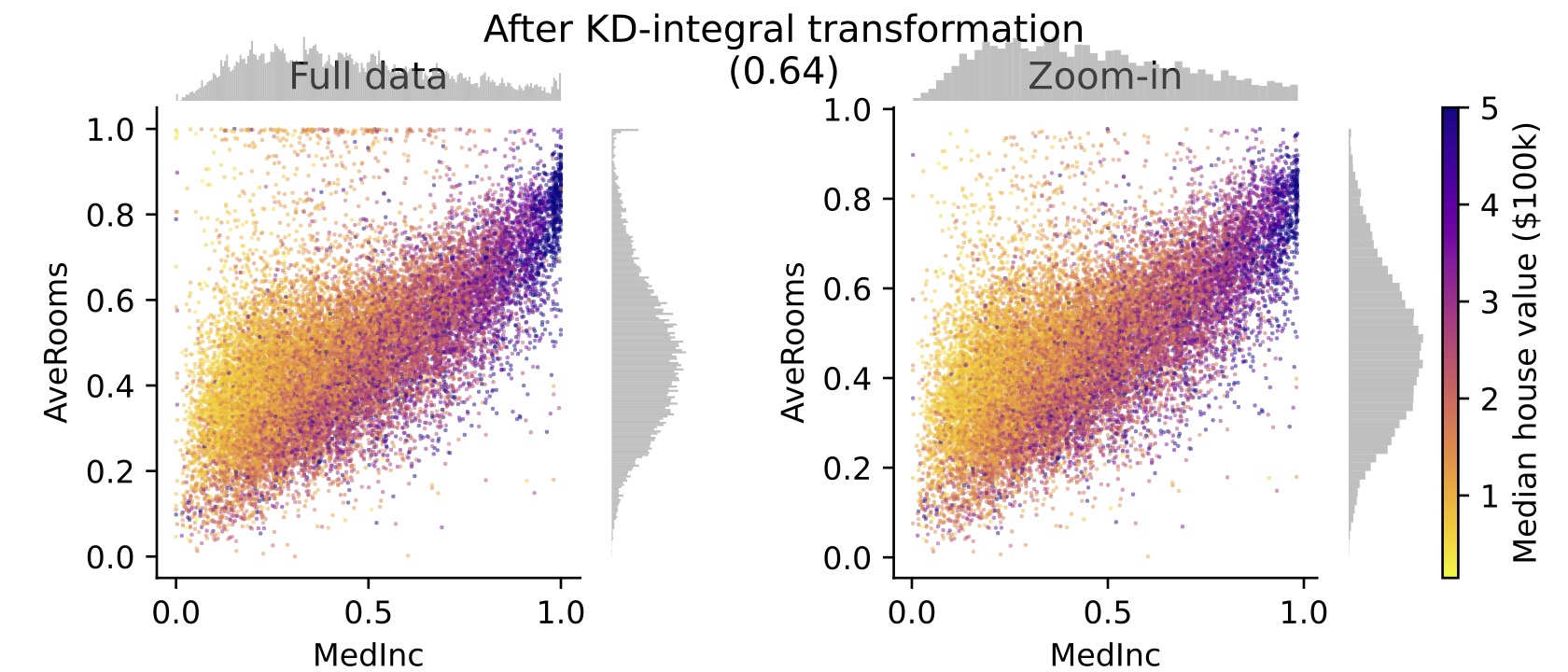} &
    \includegraphics[scale=0.12]{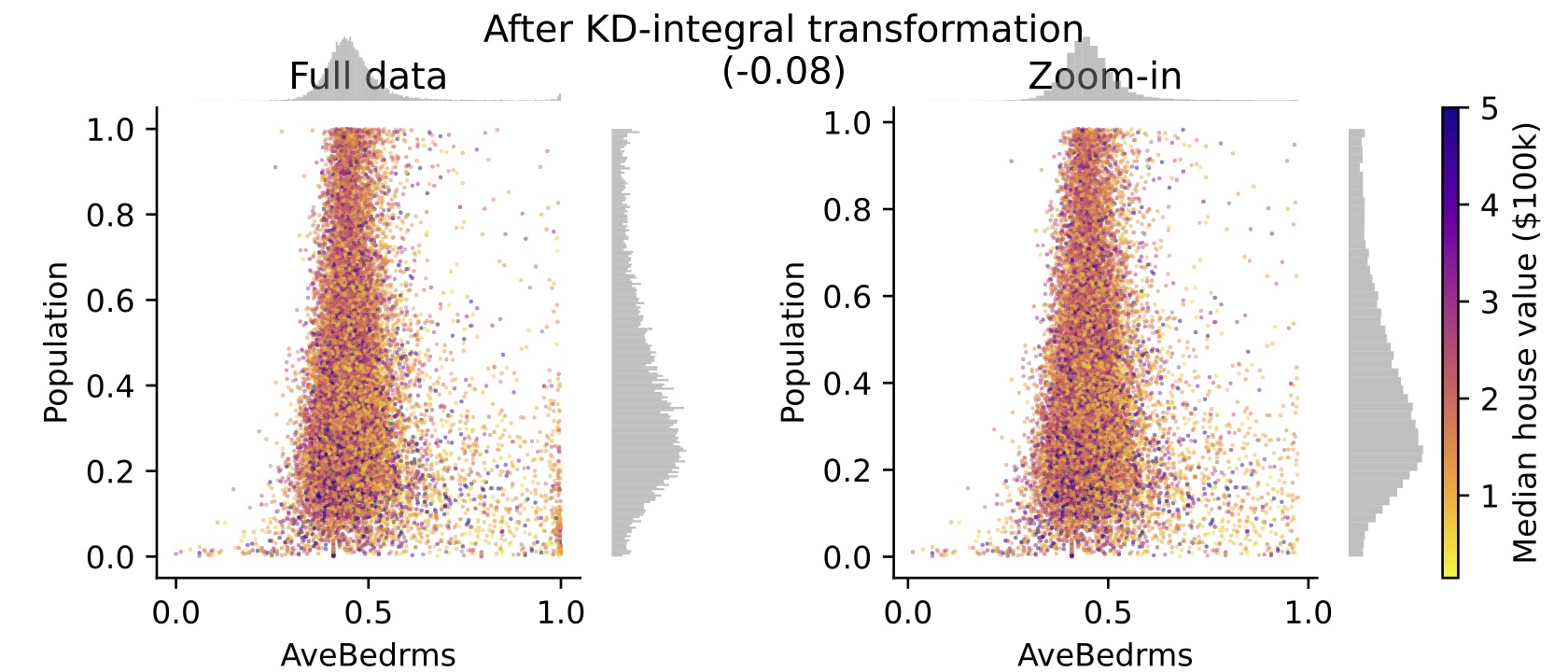}
    \end{tabular}
    \caption{Correlation analysis for (\texttt{MedianIncome}, \texttt{AverageRooms}) (A) and  (\texttt{Population}, \texttt{AverageBedrooms}) (B) in the California housing dataset. See Figure \ref{fig:correlation-california-averooms-avebedrms} for an explanation of the plot.}
    \label{fig:correlation-california}
\end{figure}

\subsection{Clustering univariate data}

In this experiment, we generate five separate synthetic univariate datasets, each sampled according to the following mixture distributions:
\begin{itemize}
    \item $0.55 * \mcN(\mu=1, \sigma=0.75) + 0.30*\mcN(\mu=4, \sigma=1) + 0.15*\textrm{Unif}[a=0, b=20]$
    \item $0.45*\mcN(\mu=1, \sigma=0.5) + 0.45*\mcN(\mu=4, \sigma=1) + 0.10*\textrm{Unif}[a=0, b=20]$
    \item $0.67*\mcN(\mu=1, \sigma=0.5) + 0.33*\mcN(\mu=4, \sigma=1)$
    \item $0.8*\textrm{Exp}(\lambda=1) + 0.2*[10+\textrm{Exp}(\lambda=4)]$
    \item $0.5*\textrm{Exp}(\lambda=8) + 0.5*[100-\textrm{Exp}(\lambda=5)]$.
\end{itemize}

We compare our approach to five other clustering algorithms: KMeans with $K$ chosen to maximize the Silhouette Coefficient \citep{rousseeuw1987silhouettes}, GMM with $K$ chosen via the Bayesian information criterion (BIC), Bayesian Gaussian Mixture Model (GMM) with a Dirichlet Process prior, Mean Shift clustering \citep{comaniciu2002mean}, and HDBSCAN \citep{campello2013density,mcinnes2017hdbscan} (with \texttt{min\_cluster\_size=5, cluster\_selection\_epsilon=0.5}), and local minima of an adaptive-bandwidth KDE with adaptive \texttt{sensitivity=0.5} \citep{abramson1982bandwidth,wang2007bandwidth}.

In Figure \ref{fig:discretizing-500}, we compare the ground-truth cluster identities of the data with the estimated cluster identities from each of the methods, for $N=500$ samples.
On the top row, we depict the true clusters, as well as the true number of mixture components.
On each of the following rows, we show the distributions of estimated clusters for each the methods. 
We see that our approach is the only method to correctly infer the true number of mixture components, and that it partitions the space similarly to ground-truth.
GMM with BIC performed well on the first three datasets, but not on the last two.
Meanwhile, KMeans performed well on the last three datasets but not on the first two.
Bayesian GMM tended to slightly overestimate the number of components, while MeanShift and HDBSCAN (even after excluding the samples it classified as noise) tended to aggressively overestimate.

We include results for an ablation of our proposed approach, in which we define separate clusters at the local minima of the KDE of unpreprocessed inputs, rather than on the KDE of KD-integrals.
We see that the ablated method fails on the first, second, and fourth datasets, where there is a large imbalance between the mixture weights of the cluster components.
Using the adaptive-bandwidth KDE fixes the second and fourth dataset but not the first.

We repeated the above experimental setup, this time varying the number of samples $N \in \{100, 200, 500, 1000, 2000, 5000\}$, and performing 20 independent simulations per each setting of $N$.
For each simulation, we recorded whether the true $K$ and estimated $\hat{K}$ number of components matched, as well as the the adjusted Rand index (ARI) between the ground-truth and estimated cluster labelings.
The results, averaged over 20 simulations, are shown in Figure \ref{fig:discretizing-N}.
Our approach is the only method to attain high accuracy across all settings when $N > 1000$.
Alternative methods behave inconsistently across datasets or across varying sample sizes.
For the first two datasets with small $N$, KD-integral is outperformed by GMM and Bayesian GMM.
However, GMM struggles on the first two datasets for large $N$, and on the fourth and fifth datasets; meanwhile, Bayesian GMM struggles on all datasets for large $N$.
The only approach that comes close to KD-integral is using local minima of the adaptive KDE; however, it struggled with smaller sample sizes on the first and second datasets.

\begin{figure}[t]
    \centering
    \includegraphics[scale=0.7]{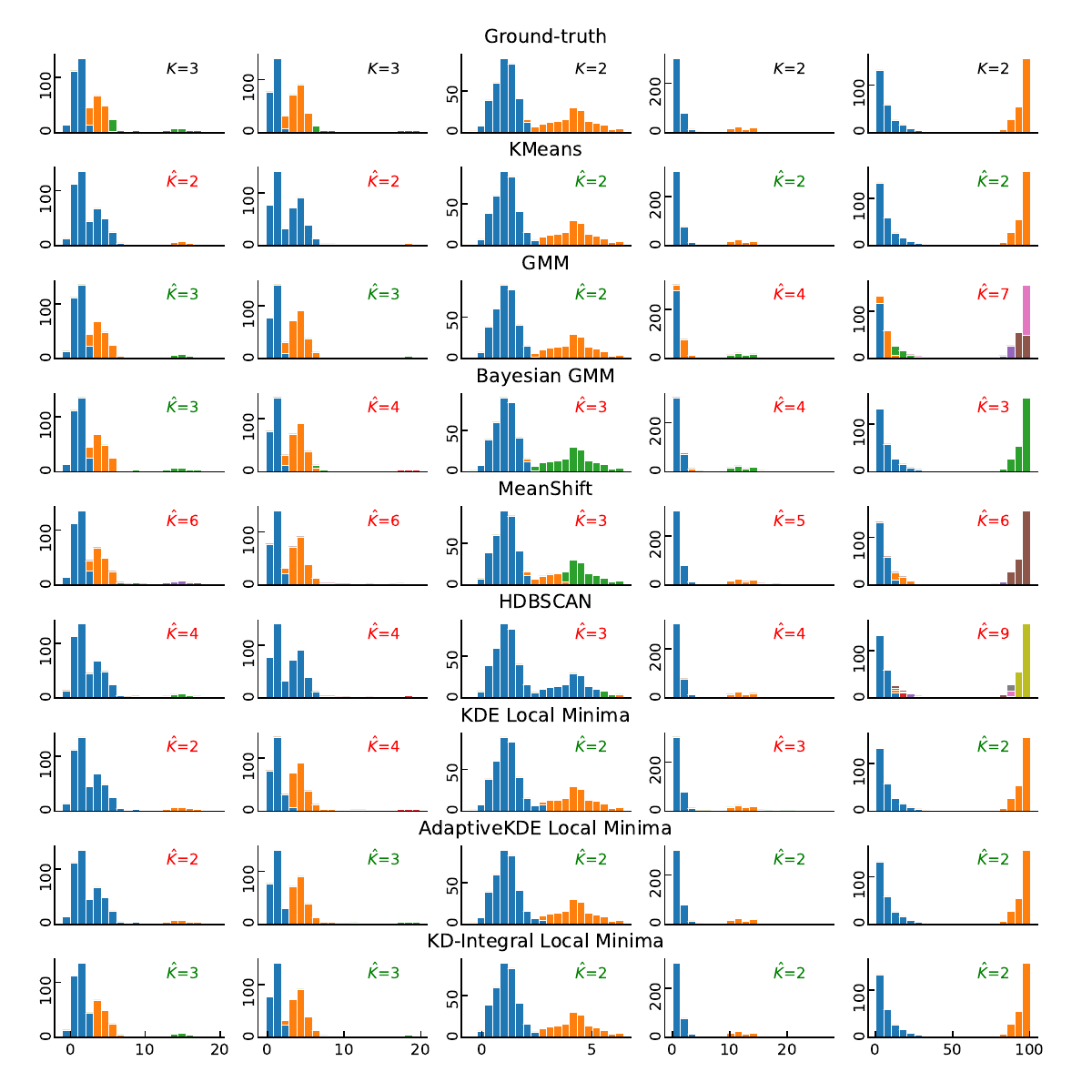}
    \caption{Clustering performance for five datasets (one dataset per column), generated with $N=500$ samples. On each of the rows, for the different methods, we indicate the estimated number of components (green if correct, red if incorrect).}
    \label{fig:discretizing-500}
\end{figure}
\begin{figure}
    \centering
    \begin{tabular}{ll}
    \begin{tabular}{l}
    \includegraphics[scale=0.6]{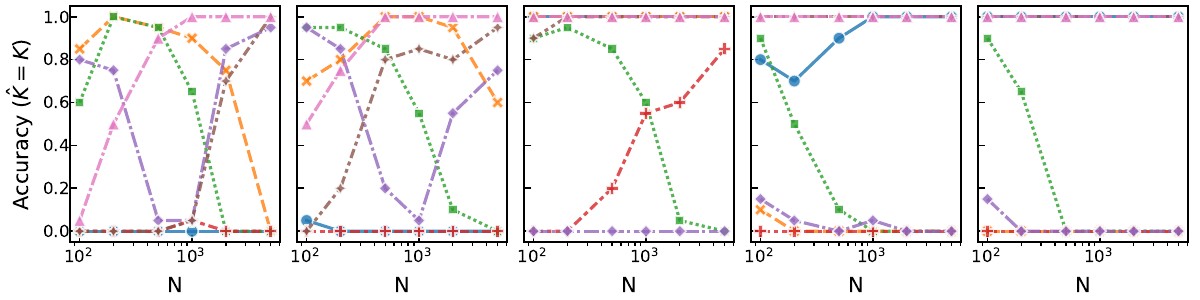} \\
    \includegraphics[scale=0.6]{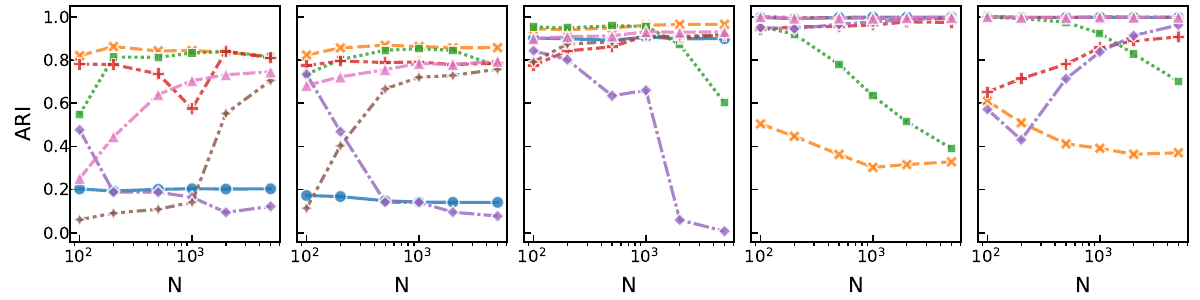}
    \end{tabular} &
    \begin{tabular}{l}
    \vspace{-0.1in} \\
    \hspace{-0.3in}\includegraphics[scale=0.4]{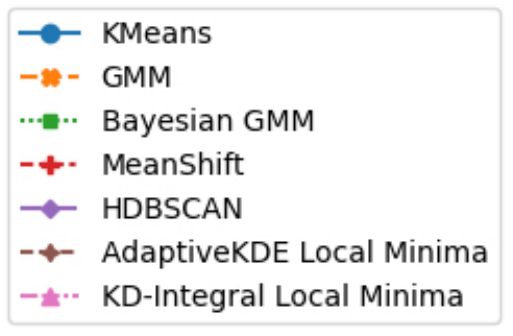}
    \end{tabular}
    \end{tabular}
    \caption{Performance at clustering univariate features, for varying number of samples $N$. The top row depicts the fraction of simulations in which the estimated number of mixture components equaled the true number. The bottom row depicts the average ARI between the ground-truth clustering and the estimated clusters. Each of the five columns corresponds to five mixture distributions described in the text and depicted on the top row of histograms.}
    \label{fig:discretizing-N}
\end{figure}

\section{Related Work}

As far as we know, the use of kernel density integrals to interpolate between min-max scaling and quantile transformation has not been previously proposed in the literature.
Similarly, we are not aware of kernel density integrals being proposed to balance between the strengths of Pearson's $r$ and Spearman's $\rho$.

Our approach is procedurally similar to copula transformations in statistics and finance \citep{cherubini2004copula,patton2012review}. But because we have a different goal, namely a generic feature transformation that is to be extrinsically optimized and evaluated, our proposed approach has a markedly different effect.
Besides the small adjustment from $\hat{F}^{\textrm{KDI,naive}}$ (which is procedurally identical to the kernel density copula transformation \citep{gourieroux2000sensitivity}) to $\hat{F}^{\textrm{KDI}}$, our proposal aims at something quite different from copula transforms.
The copula literature ultimately aims at transforming to a particular reference distribution (e.g., uniform or Gaussian), with the KDE used in place of the empirical distribution merely for statistical efficiency, thus choosing a consistency-yielding bandwidth \citep{gourieroux2000sensitivity,fermanian2003nonparametric}.
We depart from this choice, finding that a large bandwidth that preserves the shape of the input distribution is frequently optimal for settings (e.g. classification) where marginal distributions need not have a given parametric form.

Our proposed approach for univariate clustering is similar in spirit to various density-based clustering methods, including mean-shift clustering \citep{comaniciu2002mean}, level-set trees \citep{schmidberger2005unsupervised,wang2009novel,kent2013debacl}, and HDBSCAN \citep{campello2013density}.
However, such methods tend to leave isolated points as singletons, while joining points in high-density regions into larger clusters.
To our knowledge, our approach for compressing together such isolated points has not been previously considered.

The kernel density estimator (KDE) was previously proposed \citep{flores2019supervised} in the context of discretization-based preprocessing for supervised learning.
However, their method did not use kernel density integrals as a preprocessing step, but instead employed a supervised approach that, for a multiclass classification problem with $C$ classes, constructed $C$ different KDEs for each feature.

\section{Discussion}

\paragraph{Practical Recommendations} We recommend that if one must employ a feature preprocessor without any tuning or comparisons between different preprocessing methods, then KD-integral with bandwidth-factor of 1 is the best one-shot option.
If instead tuning is possible, we suggest comparing the performance of KD-integral, with a log-space sweep of $\alpha \in [0.1, 10]$, in addition to the other popular preprocessing methods.
Finally, we recommend that one always estimate our transformation with the order-4 polynomial-exponential kernel approximation of the Gaussian kernel, and represent it with $R_{\textrm{max}}=1000$ reference points.

\paragraph{Limitations}
Our approach for supervised preprocessing is limited by the fact that, even though the bandwidth factor is a continuous parameter, its selection requires costly hyperparameter tuning. Meanwhile,
our proposed approach would benefit from further theoretical analysis, as well as investigation into extensions for multivariate clustering.

\paragraph{Future Work}
In this paper, we have focused on per-feature transformation.
But, especially in genomics, it is common to perform per-sample quantile normalization \citep{bolstad2003comparison,amaratunga2001analysis}, in which features for a single sample are mapped to quantiles computed across all features; this would benefit from further study.
Also, future work could study whether kernel density integrals could be profitably used in place of vanilla quantiles outside of classic tabular machine learning problems.
For example, quantile regression \citep{koenker1978regression} has recently found increasing use in conformal prediction \citep{romano2019conformalized,liu2022conformalized}, uncertainty quantification \citep{jeon2016quantile}, and reinforcement learning \citep{rowland2023analysis}.

\section{Conclusions}

In this paper, we proposed the use of the kernel density integral transformation as a nonlinear preprocessing step, both for supervised learning settings and for statistical data analysis.
In a variety of experiments on simulated and real datasets, we demonstrated that our proposed approach is straightforward to use, requiring simple (or no) tuning to offer improved performance compared to previous approaches.

\clearpage
\bibliographystyle{tmlr}
\bibliography{main}

\clearpage
\newpage

\appendix
\section{Appendix}

\subsection{Proofs}

\begin{theorem}
With $\hat{F}^{\textrm{KDI}}_N(x;h)$ as defined in Eq (\ref{eq:kdi}), $\lim_{h \rightarrow \infty} \hat{F}^{\textrm{KDI}}_N(x;h) = \hat{S}_N(x)$.
\end{theorem}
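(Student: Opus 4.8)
The plan is to split along the three branches of the piecewise definition. For $x < X_{(1)}$ and for $x \ge X_{(N)}$ the formulas for $\hat{F}^{\textrm{KDI}}_N(x;h)$ and $\hat{S}_N(x)$ coincide exactly (they equal $0$ and $1$ respectively, for every $h$), so there is nothing to prove on those branches. All the work lies in the middle regime $X_{(1)} \le x < X_{(N)}$, and throughout I assume the nondegenerate case $X_{(1)} < X_{(N)}$ so that the denominators appearing below are nonzero.

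The key observation is that, after multiplying by $h$, the rescaled KDE $h\,\widehat{f}_h(t) = \frac{1}{N}\sum_{n=1}^N K\!\big((t-X_n)/h\big)$ converges to a constant, uniformly on the convex hull of the data. Indeed, for $t$ in the compact interval $[X_{(1)}, X_{(N)}]$ we have $|(t-X_n)/h| \le (X_{(N)}-X_{(1)})/h \to 0$ as $h \to \infty$, and since the Gaussian kernel $K$ is continuous with $K(0) = 1/\sqrt{2\pi}$, it follows that $\sup_{t \in [X_{(1)}, X_{(N)}]}\big|\,h\,\widehat{f}_h(t) - 1/\sqrt{2\pi}\,\big| \to 0$; that is, $h\,\widehat{f}_h$ converges uniformly on this interval to the constant $1/\sqrt{2\pi}$.

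From here I would integrate. Because the convergence is uniform on a bounded interval, for any $a,b \in [X_{(1)}, X_{(N)}]$ the limit passes through the integral: $h\,P_h(a,b) = \int_a^b h\,\widehat{f}_h(t)\,dt \to (b-a)/\sqrt{2\pi}$. Applying this with $(a,b)=(X_{(1)},x)$ and with $(a,b)=(X_{(1)},X_{(N)})$, and noting that the latter limit $(X_{(N)}-X_{(1)})/\sqrt{2\pi}$ is strictly positive, gives
\[
\hat{F}^{\textrm{KDI}}_N(x;h) = \frac{h\,P_h(X_{(1)},x)}{h\,P_h(X_{(1)},X_{(N)})} \xrightarrow[h\to\infty]{} \frac{(x-X_{(1)})/\sqrt{2\pi}}{(X_{(N)}-X_{(1)})/\sqrt{2\pi}} = \frac{x - X_{(1)}}{X_{(N)} - X_{(1)}} = \hat{S}_N(x).
\]

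The argument is essentially routine; the only step requiring a little care is the interchange of limit and integral, which is why I would first establish \emph{uniform} (rather than merely pointwise) convergence of $h\,\widehat{f}_h$ on the compact hull of the sample — with $N$ and the sample points fixed this is immediate from uniform continuity of $K$ near $0$. An alternative route, which yields the same conclusion with an explicit rate, is to write $P_h(a,b) = \frac{1}{N}\sum_n\big[\Phi((b-X_n)/h) - \Phi((a-X_n)/h)\big]$ in terms of the standard normal CDF $\Phi$ and Taylor-expand at $0$: since $\Phi''(0)=0$, one gets $P_h(a,b) = (b-a)/(h\sqrt{2\pi}) + O(h^{-3})$, and the ratio then equals $\hat{S}_N(x) + O(h^{-2})$.
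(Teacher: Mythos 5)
Your proof is correct, and it reaches the result by a cleaner route than the paper's. The paper works per kernel term: it views each sample's contribution as a normal density truncated to $[X_{(1)}, X_{(N)}]$, sandwiches that truncated density between bounds that collapse as $\sigma \to \infty$ to conclude it tends pointwise to the uniform density, then averages over samples and passes (somewhat informally) from convergence of the p.d.f.\ to convergence of the c.d.f. You instead normalize by $h$ rather than by the truncation integral: you show $h\,\widehat{f}_h \to 1/\sqrt{2\pi}$ \emph{uniformly} on the compact hull of the data (immediate from continuity of $K$ at $0$ with $N$ fixed), pass the limit through the integral, and take the ratio of the two limits directly, which handles the normalization built into Eq.~(\ref{eq:kdi}) without any appeal to p.d.f.-to-c.d.f.\ convergence. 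This buys you a tighter argument — the only analytic step is a limit--integral interchange justified by uniform convergence on a bounded interval — plus, via the Taylor expansion of $\Phi$ with $\Phi''(0)=0$, an explicit $O(h^{-2})$ convergence rate that the paper's qualitative argument does not provide. Your handling of the boundary branches and the nondegeneracy assumption $X_{(1)} < X_{(N)}$ (needed for the denominator) is also stated more explicitly than in the paper.
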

\begin{proof}
    Our proof relies upon our earlier observation that the min-max transformation $\hat{S}_{N}(x)$ is the c.d.f of a uniform distribution over $[X_{(1)}, X_{(N)}]$, showing that as $h \rightarrow \infty$, the distribution with c.d.f. represented by our transformation converges to this uniform distribution.
    First, we claim that a bounded distribution over a fixed region $[A, B]$ with density proportional to the density of a normal distribution $\mathcal{N}(\mu, \sigma^2$ over that same region, converges as $\sigma \rightarrow \infty$ to the uniform distribution over $[A, B]$. Relying heavily upon the proof of \citep{qyuan}, this bounded distribution has density
    \begin{gather*}
        g(x) = \frac{\exp\Big(- \frac{(x-\mu)^2}{\sigma^2}\Big)}{\int^B_A \exp\big(- \frac{(x-\mu)^2}{\sigma^2}\big)}.
    \end{gather*}
    The unnormalized density is $\tilde{g}(x) = \exp\Big(- \frac{((x-\mu)^2}{\sigma^2}\Big)$, with $\tilde{g}(x) \le 1$ and $\tilde{g}(x) > \exp\Big(- \frac{\max((A-\mu)^2, (B-\mu)^2)}{\sigma^2}\Big)$.
    Thus, we have 
    \begin{gather*}
        \frac{\exp \left( - \frac{\text{max}((A-\mu)^2, (B-\mu)^2)}{\sigma^2} \right)}{B - A} \le g(x) = \frac{\exp \left( - \frac{(x-\mu)^2}{\sigma^2}  \right)}{\int_A^B \exp \left( - \frac{(x-\mu)^2}{\sigma^2} \right) } \le \frac{1}{(B - A) \exp \left( - \frac{\text{max}((A-\mu)^2, (B-\mu)^2)}{\sigma^2} \right)},
    \end{gather*}
    and therefore $\lim_{\sigma \to \infty} g(x) = \frac{1}{B - A}$.
    
    Without loss of generality, consider a particular sample $X_i$, and the corresponding distribution $\mathcal{N}(X_i, h^2)$. Plugging in $\mu := X_i, \sigma := h, A := X_{(1)}, B := X_{(N)}$, we have that the bounded distribution based on $\mathcal{N}(X_i, h^2)$ converges to the uniform distribution over $[A, B]$.
    Because samples are assumed to be i.i.d., we have $\hat{F}^{\textrm{KDI}}_N(x;h)$ as the c.d.f. of a distribution whose p.d.f. is the mean of these bounded distributions.
    The p.d.f. therefore converges to the p.d.f of the uniform distribution, and therefore $\hat{F}^{\textrm{KDI}}_N(x;h)$ converges to the c.d.f. of the uniform distribution, which is $\hat{S}_N(x)$.
    
\end{proof}

\begin{theorem}
With $\hat{F}^{\textrm{KDI}}_N(x;h)$ as defined in Eq (\ref{eq:kdi}), $\lim_{h \rightarrow 0} \hat{F}^{\textrm{KDI}}_N(x;h) = \hat{F}_N(x)$.
\end{theorem}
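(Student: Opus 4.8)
The plan is to reduce the claim to the elementary fact that a Gaussian convolution kernel of vanishing bandwidth behaves like a point mass, so the KDE c.d.f. collapses onto the empirical c.d.f.; the only genuine subtlety is bookkeeping the normaliser in Eq.~(\ref{eq:kdi}). First I would make $P_h$ explicit: since $K_h(\cdot-X_n)$ is the $\mathcal{N}(X_n,h^2)$ density, integrating Eq.~(\ref{eq:kde-pdf}) term by term gives
\[
    P_h(a,b)=\frac1N\sum_{n=1}^N\left[\Phi\!\left(\frac{b-X_n}{h}\right)-\Phi\!\left(\frac{a-X_n}{h}\right)\right],
\]
with $\Phi$ the standard normal c.d.f., so that $\hat{F}^{\textrm{KDI}}_N(x;h)$ is expressed purely through values of $\Phi(z/h)$.

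Next I would pass to the limit using the elementary pointwise fact $\lim_{h\to0^+}\Phi(z/h)=\mathbf{1}\{z>0\}+\tfrac12\mathbf{1}\{z=0\}$. Since each $P_h$ is a finite sum, this applies termwise with no convergence machinery: for $X_{(1)}\le x<X_{(N)}$,
\[
    \lim_{h\to0^+}P_h(X_{(1)},x)=\frac1N\Big(\#\{n:X_n<x\}+\tfrac12\#\{n:X_n=x\}-\tfrac12\Big),
\]
the trailing $-\tfrac12$ being the half-mass contributed by the lower endpoint $X_{(1)}$, which is itself a sample point; in particular $\lim_{h\to0^+}P_h(X_{(1)},X_{(N)})=(N-1)/N$. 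Dividing, and reading the cases $x<X_{(1)}$ and $x\ge X_{(N)}$ straight off the definition (they hold for every $h$), gives the limit in closed form; evaluated at a sample point $X_{(k)}$ it equals $(k-1)/(N-1)$, which is exactly the value the range-normalised, tie-resolved empirical quantile transform $\hat{F}_N$ assigns to $X_{(k)}$ --- and sample points are precisely the arguments at which $\hat{F}^{\textrm{KDI}}_N$ is evaluated when it is used for preprocessing.

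The step I expect to be the main obstacle is the endpoint bookkeeping. Because the integration limits $X_{(1)}$ and $X_{(N)}$ coincide with data points, the Gaussians centred on them lose half their mass as $h\to0$, so the normaliser $P_h(X_{(1)},X_{(N)})$ tends to $(N-1)/N$, not $1$ --- the naive two-sided formula of Eq.~(\ref{eq:kdi-naive}) would be free of this, but the renormalised version of Eq.~(\ref{eq:kdi}) is not, and one must verify that the two half-mass deficits cancel cleanly against the denominator. A secondary, purely clerical point is that $\hat{F}_N$ is a step function, so at its finitely many jump locations (the sample points, and any ties among them) the pointwise limit of $\hat{F}^{\textrm{KDI}}_N$ need not literally coincide with $\hat{F}_N$; the statement is to be read at continuity points, or under the tie convention fixed in the discussion after Eq.~(\ref{eq:cdf}). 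Everywhere else the limit is simply $\#\{n:X_n\le x\}/N$ up to the $(N-1)/N$ rescaling, and the identification is routine.
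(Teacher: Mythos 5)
Your route is genuinely different from the paper's, and your endpoint bookkeeping is precisely where the two part ways. The paper argues in two stages: it first gets $\lim_{h\to0}\hat{F}^{\textrm{KDI,naive}}_N(x;h)=\hat{F}_N(x)$ from weak convergence of the KDE to the empirical distribution, and then handles the renormalisation in Eq.~(\ref{eq:kdi}) by writing $\hat{F}^{\textrm{KDI}}_N(x;h)=\bigl(P_h(-\infty,x)-P_h(-\infty,X_{(1)})\bigr)/\bigl(1-P_h(-\infty,X_{(1)})-P_h(X_{(N)},\infty)\bigr)$ and asserting that $P_h(-\infty,X_{(1)})$ and $P_h(X_{(N)},\infty)$ vanish as $h\to0$, so the denominator tends to $1$. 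Your explicit computation via $\lim_{h\to0^+}\Phi(z/h)=\mathbf{1}\{z>0\}+\tfrac12\mathbf{1}\{z=0\}$ shows that this assertion is not correct as written: the kernels centred at $X_{(1)}$ and $X_{(N)}$ keep half their mass outside $[X_{(1)},X_{(N)}]$ for \emph{every} $h$, so (with a unique minimum) $P_h(-\infty,X_{(1)})\to\tfrac{1}{2N}$ and the normaliser tends to $(N-1)/N$, exactly as you say. In that respect your argument is sharper than the paper's own proof, which reaches the literal claim only through this unjustified vanishing-tail step.

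The flip side is that the exact limit you compute is not the $\hat{F}_N$ of Eq.~(\ref{eq:cdf}): at $X_{(k)}$ it equals $(k-1)/(N-1)$ while $\hat{F}_N(X_{(k)})=k/N$, and strictly between order statistics it equals $(k-\tfrac12)/(N-1)$ while $\hat{F}_N(x)=k/N$; the discrepancy is an additive $O(1/N)$ everywhere, not confined to jump points and not removable by the $(N-1)/N$ rescaling alone, as your closing sentence suggests. (Indeed $\hat{F}^{\textrm{KDI}}_N(X_{(1)};h)=0\neq 1/N=\hat{F}_N(X_{(1)})$ for every $h$, so the statement cannot hold pointwise with Eq.~(\ref{eq:cdf}) as the target.) Your caveat --- that the identification should be read under the practical $(k-1)/(N-1)$ quantile convention, or up to $O(1/N)$ terms that disappear as $N\to\infty$ --- is therefore the right one, and with that reading your proof is complete; the residual imprecision lies in the theorem's statement (and the paper's proof of it) rather than in your computation.
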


\begin{proof}
    We first show that $\hat{F}^{\textrm{KDI,naive}}_N(x;h)$ as defined in Eq (\ref{eq:kdi-naive}) converges to $\hat{F}_N(x)$. We note that the KDE distribution represented by Eq. \ref{eq:kde-pdf} converges to the empirical distribution as $h \rightarrow 0$, and therefore the c.d.f. of the KDE converges to the c.d.f. of the empirical distribution. The c.d.f. of the KDE is $\hat{F}^{\textrm{KDI,naive}}_N(x;h)$, and the c.d.f. of the empirical distribution is the quantile transformation function $\hat{F}_N(x)$. Therefore, $\lim_{h \rightarrow 0} \hat{F}^{\textrm{KDI,naive}}_N(x;h) = \hat{F}_N(x)$.
    
    We next address $\hat{F}^{\textrm{KDI}}_N(x;h)$, utilizing the above result. By construction, $\hat{F}^{\textrm{KDI}}_N(x;h) = \hat{F}_N(x)$ for $x < X_{(1)}$ and $X_{(N)} \le x$. For $[X_{(1)}, X_{(N)})$, we show that $\lim_{h \rightarrow 0} \hat{F}^{\textrm{KDI}}_N(x;h) = \lim_{h \rightarrow 0} \hat{F}^{\textrm{KDI,naive}}_N(x;h)$ on this interval.
    \begin{align}
        \lim_{h \rightarrow 0} \hat{F}^{\textrm{KDI}}_N(x;h) =&  \lim_{h \rightarrow 0} \frac{P_h(X_{(1)},x)}{P_h(X_{(1)},X_{(N)})} \\
        =& \lim_{h \rightarrow 0} \frac{P_h(-\infty,x) - P_h(-\infty,X_{(1)})}{1-P_h(-\infty,X_{(1)}) - P_h(X_{(N)},\infty)} \\
        =& \frac{\lim_{h \rightarrow 0} P_h(-\infty,x) - 0}{1-0 - 0} \\
        =& \lim_{h \rightarrow 0} \hat{F}^{\textrm{KDI,naive}}_N(x;h),
    \end{align}
    where the second last step uses $\lim_{h \rightarrow 0}P_h(-\infty,X_{(1)}) = \lim_{h \rightarrow 0}P_h(X_{(N)},\infty) = 0$ and continuity at these values.
\end{proof}

\subsection{Implementation details for polynomial-exponential kernel}
The $\textrm{poly}(|x|)e^{-|x|}$ class of kernels is defined as the set of kernels which can be represented as
\begin{gather}
    K_\kappa(x) = C e^{-|x|} \sum^\kappa_{i=0} \beta_i |x|^i,
\end{gather}
where $\kappa$ is the order of the polynomial and of the kernel, and $C$ is the normalizing constant.
We use kernels from the differentiable subclass of the $\textrm{poly}(|x|)e^{-|x|}$ kernels. 
Namely, for order $\kappa$, we choose coefficients $\beta_i$ as $\beta_{i \in \{0,\dots, K\}} \propto 1/i!$, so that the kernel takes the form
\begin{gather}
K_\kappa(x) = \frac{1}{2(\kappa+1)}\sum^\kappa_{i=0}\frac{|x|^i}{i!}e^{-|x|}, \quad \kappa = 1,2,\dots
\end{gather}

To make the polynomial-exponential kernel approximate the Gaussian kernel as closely as possible, we need to rescale the Gaussian kernel bandwidth.
For a given Gaussian kernel bandwidth $h_{\mathcal{N}} = \alpha \sigma_X$, the polynomial-exponential kernel bandwidth is calculated as follows:
\begin{gather}
    h_{\textrm{polyexp}} = \Bigg(\frac{2 \sqrt{\pi} \sum\limits^{\kappa}_{i=0}\sum\limits^\kappa_{j=0} \frac{\beta_i \beta_j}{2^{i+j}}(i+j)!}{\Big(2\sum\limits^\kappa_{i=0} \beta_i (i+2)!\Big)^2}\Bigg)^{0.2} h_{\mathcal{N}}.
\end{gather}

This rescaling factor is derived from the asymptotic mean integrated squared error (AMISE) optimal bandwidth for density estimation. 
For an arbitrary kernel $K$, its bandwidth $h$ is AMISE optimal if 
\begin{gather*}
    h \propto \Big(\frac{\|K_h\|^2}{(\sigma^2_{K_h})^2}\Big)^{0.2}.
\end{gather*}
Obtaining the expressions on the right-hand side for both the Gaussian kernel and the polynomial-exponential kernel, then setting them equal to each other, gives the above rescaling factor.
The needed expressions for the polynomial-exponential kernel are given in \citep{hofmeyr2019fast}.

\subsection{Additional experiments on bandwidth vs sample size}

We repeat the experiments from Figure \ref{fig:regression}(A) with CA Housing (with $N=20460$), but with subsampling to determine whether the optimal bandwidth depends on sample size. 
We follow the same experimental setup as before, but in each of the 100 simulations, we use an independent subsample (without replacement) of 1000, 2000, 5000, and 10000 samples.
Results are shown in Figure \ref{fig:regression-subsampling}.
We see that the optimal bandwidth stays constant rather than vanishing as the sample size increases.

\begin{figure}
    \centering
    \begin{tabular}{l l}
    (A) $N=1000$ & (B) $N=2000$ \\
    \includegraphics[scale=0.6]{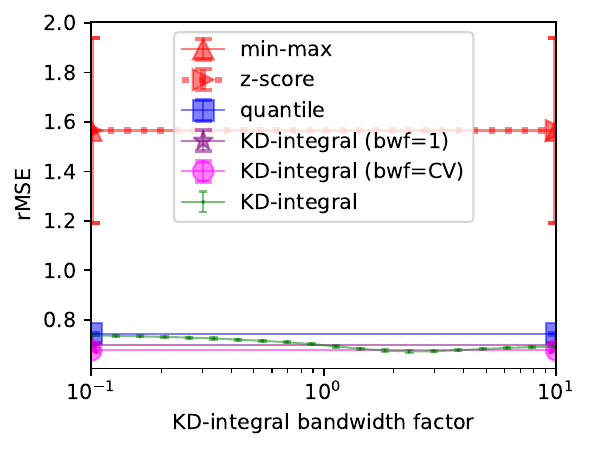} &
    \includegraphics[scale=0.6]{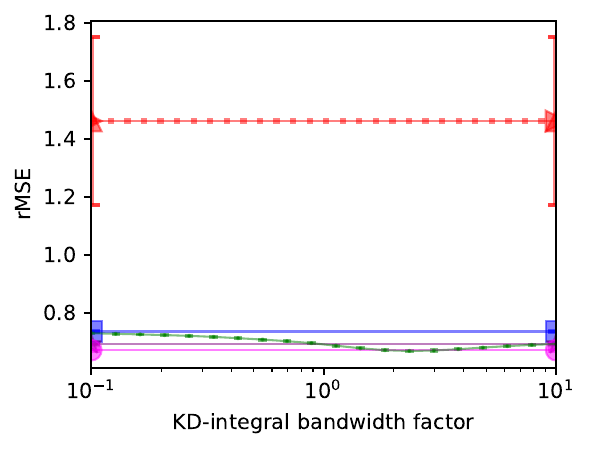} \\
    (C) $N=5000$ & (D) $N=10000$ \\
    \includegraphics[scale=0.6]{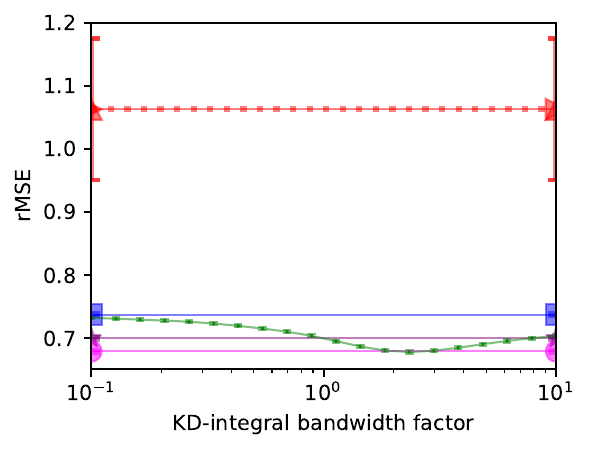} &
    \includegraphics[scale=0.6]{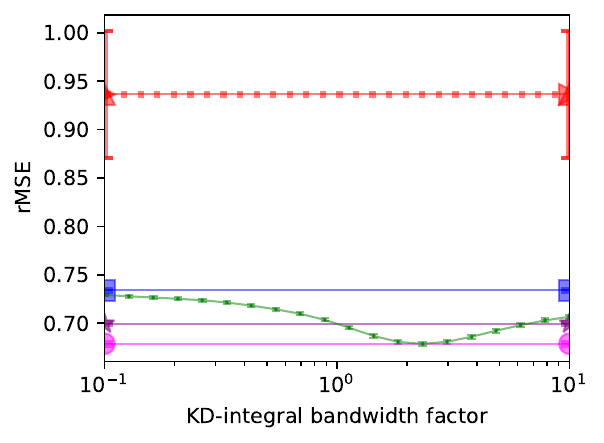}
    \end{tabular}
    \caption{Root mean-squared error (rMSE) on subsampled CA housing datasets, with reduced sizes (A) $N=1000$, (B) $N=2000$, (C) $N=5000$, and (D) $N=10000$.}
    \label{fig:regression-subsampling}
\end{figure}

\end{document}